\newcommand\m[1]{\begin{pmatrix}#1\end{pmatrix}} 
\newtheorem{theorem}{Theorem}[]
\newtheorem{lemma}[theorem]{Lemma}
\newcommand\abs[1]{\left|#1\right|}
\begin{document}

\title{Transformer Interpretability Beyond Attention Visualization}

\author{Hila Chefer\textsuperscript{\rm 1} \quad Shir Gur\textsuperscript{\rm 1} \quad Lior Wolf\textsuperscript{\rm 1,2}\\
\textsuperscript{\rm 1}The School of Computer Science, Tel Aviv University\\
\textsuperscript{\rm 2}Facebook AI Research (FAIR)\\
}

\maketitle

\begin{abstract}
Self-attention techniques, and specifically Transformers, are dominating the field of text processing and are becoming increasingly popular in computer vision classification tasks. In order to visualize the parts of the image that led to a certain classification, existing methods either rely on the obtained attention maps or employ heuristic propagation along the attention graph. In this work, we propose a novel way to compute relevancy for Transformer networks. The method assigns local relevance based on the Deep Taylor Decomposition principle and then propagates these relevancy scores through the layers. This propagation involves attention layers and skip connections, which challenge existing methods. Our solution is based on a specific formulation that is shown to maintain the total relevancy across layers. We benchmark our method on very recent visual Transformer networks, as well as on a text classification problem, and demonstrate a clear advantage over the existing explainability methods. Our code is available at: \url{https://github.com/hila-chefer/Transformer-Explainability}.
\end{abstract}

\section{Introduction}

Transformers and derived methods~\cite{vaswani2017attention,devlin2018bert,liu2019roberta,radford2019language} are currently the state-of-the-art methods in almost all NLP benchmarks. The power of these methods has led to their adoption in the field of language and vision~\cite{lu2019vilbert,tan2019lxmert,su2019vl}. More recently, Transformers have become a leading tool in traditional computer vision tasks, such as object detection~\cite{carion2020end} and image recognition~\cite{chen2020generative,dosovitskiy2020image}.
The importance of Transformer networks necessitates tools for the visualization of their decision process. Such a visualization can aid in debugging the models, help verify that the models are fair and unbiased, and enable downstream tasks.

The main building block of Transformer networks are self-attention layers~\cite{parikh2016decomposable,cheng2016long}, which assign a pairwise attention value between every two tokens. In NLP, a token is typically a word or a word part. In vision, each token can be associated with a patch~\cite{dosovitskiy2020image,carion2020end}. A common practice when trying to visualize Transformer models is, therefore, to consider these attentions as a relevancy score~\cite{vaswani2017attention,xu2015show,carion2020end}.  This is usually done for a single attention layer.
Another option is to combine multiple layers. Simply averaging the attentions obtained for each token, would lead to blurring of the signal and would not consider the different roles of the layers: deeper layers are more semantic, but each token accumulates additional context each time self-attention is applied.
The rollout method~\cite{abnar2020quantifying} is an alternative, which reassigns all attention scores by considering the pairwise attentions and assuming that attentions are combined linearly into subsequent contexts. The method seems to improve results over the utilization of a single attention layer. However, as we show, by relying on simplistic assumptions, irrelevant tokens often become highlighted.

In this work, we follow the line of work that assigns relevancy and propagates it, such that the sum of relevancy is maintained throughout the layers~\cite{montavon2017explaining}. While the application of such methods to Transformers has been attempted~\cite{voita2019analyzing}, this was done in a partial way that does not propagate attention throughout all layers. 

Transformer networks heavily rely on skip connection and attention operators, both involving the mixing of two activation maps, and each leading to unique challenges. Moreover, Transformers apply non-linearities other than ReLU, which result in both positive and negative features. Because of the non-positive values, skip connections lead, if not carefully handled, to numerical instabilities. Methods such as LRP~\cite{binder2016layer} for example, tend to fail in such cases. 
Self-attention layers form a challenge since a naive propagation through these would not maintain the total amount of relevancy.

We handle these challenges by first introducing a relevancy propagation rule that is applicable to both positive and negative attributions. Second, we present a normalization term for non-parametric layers, such as ``add'' (\eg skip-connection) and matrix multiplication. Third, we integrate the attention and the relevancy scores, and combine the integrated results for multiple attention blocks.

Many of the interpretability methods used in computer vision are not class-specific in practice, \ie, return the same visualization regardless of the class one tries to visualize, even for images that contain multiple objects. The class-specific signal, especially for methods that propagate all the way to the input, is often blurred by the salient regions of the image. Some methods avoid this by not propagating to the lower layers~\cite{selvaraju2017grad}, while other methods contrast different classes to emphasize the differences~\cite{gu2018understanding}. Our method provides the class-based separation by design and it is the only Transformer visualization method, as far as we can ascertain, that presents this property.

Explainability, interpretability, and relevance are not uniformly defined in the literature~\cite{mittelstadt2019explaining}. For example, it is not clear if one would expect the resulting image to contain all of the pixels of the identified object, which would lead to better downstream tasks~\cite{li2018tell} and for favorable human impressions, or to identify the sparse image locations that cause the predicted label to dominate. While some methods offer a clear theoretical framework~\cite{lundberg2017unified}, these rely on specific assumptions and often do not lead to better performance on real data. Our approach is a mechanistic one and avoids  controversial issues. Our goal is to improve the performance on the acceptable benchmarks of the field. This goal is achieved on a diverse and complementary set of computer vision benchmarks, representing multiple approaches to explainability.

These benchmarks include image segmentation on a subset of the ImageNet dataset, as well as positive and negative perturbations on the ImageNet validation set. In NLP, we consider a public NLP explainability benchmark~\cite{deyoung2019eraser}. In this benchmark, the task is to identify the excerpt that was marked by humans as leading to a decision.

\section{Related Work}

\paragraph{Explainability in computer vision} Many methods were suggested for generating a heatmap that indicates local relevancy, given an input image and a CNN. Most of these methods belong to one of two classes: gradient methods and attribution methods. 

{\em Gradient based} methods are based on the gradients with respect to the input of each layer, as computed through backpropagation. The gradient is often multiplied by the input activations, which was first done in the Gradient*Input method~\cite{shrikumar2016not}. Integrated Gradients~\cite{sundararajan2017axiomatic} also compute the multiplication of the inputs with their derivatives. However, this computation is done on the average gradient and a linear interpolation of the input.
SmoothGrad~\cite{smilkov2017smoothgrad}, visualizes the mean gradients of the input, and performs smoothing by adding to the input image a random Gaussian noise at each iteration. The FullGrad method~\cite{srinivas2019full} offers a more complete modeling of the gradient by also considering the gradient with respect to the bias term, and not just with respect to the input.
We observe that these methods are all class-agnostic: at least in practice, similar outputs are obtained,  regardless of the class used to compute the gradient that is being propagated.

The GradCAM method~\cite{selvaraju2017grad} is a class-specific approach, which combines both the input features and the gradients of a network's layer. Being class-specific, and providing consistent results, this method is used by downstream applications, such as weakly-supervised semantic segmentation~\cite{li2018tell}. However, the method's computation is based only on the gradients of the deepest layers. The result, obtained by upsampling these low-spatial resolution layers, is coarse.

A second class of methods, the {\em Attribution propagation} methods, are justified theoretically by the Deep Taylor Decomposition (DTD) framework~\cite{montavon2017explaining}. Such methods decompose, in a recursive manner, the decision made by the network, into the contributions of the previous layers, all the way to the elements of the network's input.
The Layer-wise Relevance Propagation (LRP) method~\cite{bach2015pixel}, propagates relevance from the predicated class, backward, to the input image based on the DTD principle. This assumes that the rectified linear unit (ReLU) non-linearity is used. Since Transformers typically rely on other types of applications, our method has to apply DTD differently. Other variants of attribution methods include RAP~\cite{nam2019relative}, AGF~\cite{gur2021visualization}, DeepLIFT~\cite{shrikumar2017learning}, and DeepSHAP~\cite{lundberg2017unified}.
A disadvantage of some of these methods is the class-agnostic behavior observed in practice~\cite{iwana2019explaining}. Class-specific behavior is obtained by Contrastive-LRP (CLRP)~\cite{gu2018understanding} and Softmax-Gradient-LRP (SGLRP)~\cite{iwana2019explaining}. In both cases, the LRP propagation results of the class to be visualized are contrasted with the results of all other classes, to emphasize the differences and produce a class-dependent heatmap. Our method is class-specific by construction and not by adding additional contrasting stages.

Methods that do not fall into these two main categories include saliency based methods~\cite{dabkowski2017real,simonyan2013deep,mahendran2016visualizing,zhou2016learning,zeiler2014visualizing,zhou2018interpreting}, {Activation Maximization}~\cite{erhan2009visualizing} and Excitation Backprop~\cite{zhang2018top}.
Perturbation methods~\cite{fong2019understanding,fong2017interpretable} consider the change to the decision of the network, as small changes are applied to the input. Such methods are intuitive and applicable to black-box models (no need to inspect either the activations or the gradients). However, the process of generating the heatmap is computationally expensive. In the context of Transformers, it is not clear how to apply these correctly to discrete tokens, such as in text.
Shapley-value methods~\cite{lundberg2017unified} have a solid theoretical justification. However, such methods suffer from a large computational complexity and their accuracy is often not as high as other methods. Several variants have been proposed, which improve both aspects~\cite{chen2018lshapley}. 

\paragraph{Explainability for Transformers}

There are not many contributions that explore the field of visualization for Transformers and, as mentioned, many contributions employ the attention scores themselves. This practice ignores most of the attention components, as well as the parts of the networks that perform other types of computation. A self-attention head involves the computation of queries, keys, and values. Reducing it only to the obtained attention scores (inner products of queries and keys) is myopic. Other layers are not even considered. Our method, in contrast, propagates through all layers from the decision back to the input.

LRP was applied for Transformers based on the premise that considering mean attention heads is not optimal due to different relevance of the attention heads in each layer~\cite{voita2019analyzing}. However, this was done in a limiting way, in which no relevance scores were propagated back to the input, thus providing partial information on the relevance of each head.
We note that the relevancy scores were not directly evaluated, only used for visualization of the relative importance and for pruning less relevant attention heads. 

The main challenge in assigning attributions based on attentions is that attentions are combining non-linearly from one layer to the next. The rollout method~\cite{abnar2020quantifying} assumes that attentions are combined linearly and considers paths along the pairwise attention graph. We observe that this method often leads to an emphasis on irrelevant tokens since even average attention scores can be attenuated. The method also fails to distinguish between positive and negative contributions to the decision. Without such a distinction, one can mix between the two and obtain high relevancy scores, when the contributions should have cancelled out.
Despite these shortcomings, the method was already applied by others~\cite{dosovitskiy2020image} to obtain integrated attention maps.

Abnar et al.~\cite{abnar2020quantifying} present, in addition to rollout, a second method called attention flow. The latter considers the max-flow problem along the pair-wise attention graph. It is shown to be sometimes more correlated than the rollout method with relevance scores that are obtained by applying masking, or with gradients with respect to the input. This method is much slower and we did not evaluate it in our experiments for computational reasons.

We note this concurrent work~\cite{abnar2020quantifying} did not perform an evaluation on benchmarks (for either rollout or attention-flow) in which relevancy is assigned in a way that is independent of the BERT~\cite{devlin2018bert} network, for which the methods were employed. There was also no comparison to relevancy assignment methods, other than the raw attention scores.

\section{Method}
The method employs LRP-based relevance to compute scores for each attention head in each layer of a Transformer model~\cite{vaswani2017attention}. It then integrates these scores throughout the attention graph, by incorporating both relevancy and gradient information, in a way that iteratively removes the negative contributions. The result is a class-specific visualization for self-attention models.

\subsection{Relevance and gradients}
Let $C$ be the number of classes in the classification head, and $t \in 1\dots|C|$ the class to be visualized.  We propagate relevance and gradients with respect to class $t$, which is not necessarily the predicted class.
Following literature convention, we denote $x^{(n)}$ as the input of layer $L^{(n)}$, where $n\in [1\dots N]$ is the layer index in a network that consists of $N$ layers, $x^{(N)}$ is the input to the network, and $x^{(1)}$ is the output of the network.

Recalling the chain-rule, we propagate gradients with respect to the classifier's output $y$, at class $t$, namely $y_t$:
\begin{align}
    \label{eq:chain_rule}
    \nabla x_j^{(n)} := \frac{\partial y_t}{\partial x_j^{(n)}} &= \sum_{i} \frac{\partial y_t}{\partial x_i^{(n-1)}} \frac{\partial x_i^{(n-1)}}{\partial x_j^{(n)}}
\end{align}
where the index $j$ corresponds to elements in $x^{(n)}$, and $i$ corresponds to elements in $x^{(n-1)}$.

We denote by $L^{(n)}(\mathbf{X}, \mathbf{Y})$ the layer's operation on two tensors $\mathbf{X}$ and $\mathbf{Y}$. Typically, the two tensors are the input feature map and weights for layer $n$. Relevance propagation follows the generic Deep Taylor Decomposition~\cite{montavon2017explaining}:
\begin{align}
    \label{eq:generic_rel}
    R_j^{(n)} &= \mathcal{G}(\mathbf{X},\mathbf{Y},R^{(n-1)})\\
    \nonumber
    &= \sum_i \mathbf{X}_j\frac{\partial L^{(n)}_i(\mathbf{X}, \mathbf{Y})}{\partial \mathbf{X}_j} \frac{R^{(n-1)}_i}{L^{(n)}_{{i}}(\mathbf{X}, \mathbf{Y})}\,,
\end{align}
where, similarly to Eq.~\ref{eq:chain_rule}, the index $j$ corresponds to elements in $R^{(n)}$, and $i$ corresponds to elements in $R^{(n-1)}$.
Eq.~\ref{eq:generic_rel} satisfies the conservation rule~\cite{montavon2017explaining}, \ie:
\begin{align}
    \label{eq:conservation}
    \sum_j R^{(n)}_j = \sum_i R^{(n-1)}_i
\end{align}
{LRP~\cite{bach2015pixel} assumes ReLU non-linearity activations, resulting in non-negative feature maps, where the relevance propagation rule can be defined as follows:
\begin{align}
    \label{eq:lrp}
    R_j^{(n)} &= \mathcal{G}(x^+,w^+,R^{(n-1)}) = \sum_i \frac{x^+_j w_{ji}^+}{\sum_{{j^\prime}}x^+_{j^\prime} w_{{j^\prime}i}^+}R^{(n-1)}_i
\end{align}
where $\mathbf{X}=x$ and $\mathbf{Y}=w$ are the layer's input and weights. The superscript denotes the operation $\max(0, v)$ as $v^+$.}

{Non-linearities other that ReLU, such as GELU~\cite{hendrycks2016gaussian}, output both positive and negative values.
To address this, LRP propagation in Eq.~\ref{eq:lrp} can be modified by constructing a subset of indices $q = \{(i,j) | x_j w_{ji}\ge0 \}$, resulting in the following relevance propagation:
\begin{align}
    \label{eq:relevance}
    \nonumber
    R_j^{(n)} &= \mathcal{G}_q(x,w,q,R^{(n-1)})\\
    &= \sum_{\{{i}|({i},j)\in q\}} \frac{x_j w_{ji}}{\sum_{\{{j^\prime}|({j^\prime},i)\in q\}}x_{j^\prime} w_{{j^\prime}i}}R^{(n-1)}_i
\end{align}
In other words, we consider only the elements that have a positive weighed relevance.}

To initialize the relevance propagation, we set $R^{(0)} = \mathbb{1}_t$,
where $\mathbb{1}_t$ is a one-hot indicating the target class $t$.

\subsection{Non parametric relevance propagation:}
There are two operators in Transformer models that involve mixing of two feature map tensors (as opposed to a feature map with a learned tensor): skip connections and matrix multiplications (\eg in attention modules). The two operators require the propagation of relevance through both input tensors. Note that the two tensors may be of different shapes in the case of matrix multiplication.

Given two tensors $u$ and $v$, we compute the relevance propagation of these binary operators (\ie, operators that process two operands), as follows:
\begin{align}
    R_j^{u^{(n)}} = \mathcal{G}(u,v,R^{(n-1)}), \quad
    R_k^{v^{(n)}} = \mathcal{G}(v,u,R^{(n-1)}) \label{eq:uvr}
\end{align}
where $R_j^{u^{(n)}}$ and $R_k^{v^{(n)}}$ are the relevances for $u$ and $v$ respectively. 
These operations yield both positive and negative values.

The following lemma shows that for the case of addition, the conservation rule is preserved, \ie, 
\begin{equation}
\sum_j R_j^{u^{(n)}} + \sum_k R_k^{v^{(n)}} = \sum_i R_i^{(n-1)}
\label{eq:binarycr}.
\end{equation}
However, this is not the case for matrix multiplication.

\begin{lemma}
\label{lemma1}
Given two tensors $u$ and $v$, consider the relevances that are computed according to Eq.~\ref{eq:uvr}. Then, (i) if layer $L^{(n)}$ adds the two tensors, \ie,  $L^{(n)}(u,v) = u+v$ then the conservation rule of Eq.~\ref{eq:binarycr} is maintained. (ii) if the layer performs matrix multiplication $L^{(n)}(u,v) = uv$, then Eq.~\ref{eq:binarycr} does not hold in general.
\end{lemma}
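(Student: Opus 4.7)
The plan is to unfold the definition of $\mathcal{G}$ from Eq.~\ref{eq:generic_rel} in each of the two cases and exploit the very different homogeneity behaviour of addition versus matrix multiplication with respect to a single argument. In both cases I would treat $u$ and $v$ symbolically, computing the partial derivatives $\partial L^{(n)}_i/\partial u_j$ and $\partial L^{(n)}_i/\partial v_k$ directly from the definition of the layer, plugging them into Eq.~\ref{eq:uvr}, and then summing over the input indices $j$ and $k$.

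For part (i), the layer acts coordinate-wise, $L^{(n)}_i(u,v) = u_i + v_i$, so $\partial L^{(n)}_i/\partial u_j = \delta_{ij}$ and similarly for $v$. Substituting into Eq.~\ref{eq:uvr} collapses the sum over $i$ and yields
\begin{equation*}
R_j^{u^{(n)}} = \frac{u_j}{u_j+v_j}\,R_j^{(n-1)}, \qquad R_k^{v^{(n)}} = \frac{v_k}{u_k+v_k}\,R_k^{(n-1)}.
\end{equation*}
The two fractions have the same denominator at every index, so adding the two sums coordinate-wise gives $\sum_j(u_j+v_j)/(u_j+v_j)\cdot R_j^{(n-1)} = \sum_i R_i^{(n-1)}$, establishing Eq.~\ref{eq:binarycr}. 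The key point is that $L^{(n)}_i(u,v)$ is not $1$-homogeneous in $u$ alone: the presence of $v$ in the denominator ensures that only a fraction of each $R_i^{(n-1)}$ is attributed to $u$, and the complementary fraction goes to $v$.

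For part (ii), I would observe that matrix multiplication $L^{(n)}_{ik}(u,v)=\sum_l u_{il}v_{lk}$ is separately $1$-homogeneous in $u$ and in $v$, i.e.\ $\sum_{j}u_j\,\partial L^{(n)}_i/\partial u_j = L^{(n)}_i(u,v)$ and similarly for $v$. Substituting this into the definition of $\mathcal{G}$ causes the $L^{(n)}_i$ factors to cancel against the denominator, so summing Eq.~\ref{eq:uvr} over $j$ yields $\sum_j R_j^{u^{(n)}} = \sum_i R_i^{(n-1)}$ by itself, and the same identity holds for $v$. Hence the total relevance is $2\sum_i R_i^{(n-1)}$, which violates Eq.~\ref{eq:binarycr} whenever $\sum_i R_i^{(n-1)}\neq 0$. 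I would close part (ii) with a minimal concrete counterexample (e.g.\ $u=(1,1)$, $v=(1,1)^\top$, $R^{(n-1)}=1$, giving total $2$ rather than $1$) to make the failure unambiguous.

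The main obstacle I anticipate is purely bookkeeping: in the matrix-multiplication case the indices $(i,k)$ on the output and $(i,l)$, $(l,k)$ on the inputs must be tracked carefully through the partial derivatives so that the cancellation against the denominator is visible. Once that calculation is written out, the homogeneity argument makes both the success of addition and the failure of matrix multiplication transparent, and no deeper ideas are required.
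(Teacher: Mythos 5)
Your proposal is correct and follows essentially the same route as the paper's proof: unfold $\mathcal{G}$, compute $\partial L^{(n)}/\partial u$ and $\partial L^{(n)}/\partial v$ directly, and observe that for addition the two fractions share a denominator and sum to one, whereas for matrix multiplication each operand separately absorbs the full relevance so the total becomes $2\sum_i R_i^{(n-1)}$. Your explicit framing via $1$-homogeneity and the added concrete counterexample are nice touches, but they package the same calculation the paper carries out.
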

\begin{proof}
(i) and (ii) are obtained from the output derivative of $L^{(n)}$ with respect to $\mathbf{X}$. In an add layer, $u$ and $v$ are independent of each other, while in matrix multiplication they are connected. A detailed proof of Lemma~\ref{lemma1} is available in the supplementary.
\end{proof}

When propagating relevance of skip connections, we encounter numerical instabilities. This arises despite the fact that, by the conservation rule of the addition operator, the sum of relevance scores is constant. The underlying reason is that the relevance scores tend to obtain large absolute values, due to the way they are computed (Eq.~\ref{eq:generic_rel}). To see this, consider the following example:
\begin{align}
    u = \m{e^a\\e^b} \quad , v = \m{1-e^a\\1-e^b}, \quad  R = \m{1\\1} \\
    R^u = \m{\frac{e^a}{e^a - e^a + 1}1\\\frac{e^b}{e^b - e^b + 1}1} = \m{e^a\\e^b}, \quad  R^v = \m{1-e^a\\1-e^b}
\end{align}
where $a$ and $b$ are large positive numbers. It is easy to verify that $\sum R^u + \sum R^v = {e^a + 1-e^a+e^b + 1-e^b} = \sum R$.  
As can be seen, while the conservation rule is preserved, the relevance scores of $u$ and $v$ may explode. See supplementary for a step by step computation.

To address the lack of conservation in the attention mechanism due to matrix multiplication, and the numerical issues of the skip connections, our method applies a normalization to $R_j^{u^{(n)}}$ and $R_k^{v^{(n)}}$:

\begin{align}
    \nonumber
    \bar{R}_j^{u^{(n)}} &= R_j^{u^{(n)}} \frac{ \abs{\sum_j{R_j^{u^{(n)}}}}}{\abs{\sum_j {R_j^{u^{(n)}}}} + \abs{\sum_k{R_k^{v^{(n)}}}}} \cdot \frac{\sum_i R_i^{(n-1)}}{\sum_j R_j^{u^{(n)}}}\\
    \bar{R}_k^{v^{(n)}} &= R_k^{v^{(n)}} \frac{\abs{\sum_k {R_k^{v^{(n)}}}}}{\abs{\sum_j {R_j^{u^{(n)}}}} + \abs{\sum_k {R_k^{v^{(n)}}}}} \cdot \frac{\sum_i R_i^{(n-1)}}{\sum_k R_k^{v^{(n)}}}
    \label{eq:norm} \notag
\end{align}

Following the conservation rule (Eq.~\ref{eq:conservation}), and the initial relevance, we obtain $\sum_i R_i^{(n)} = 1$ for each layer $n$.

The following lemma presents the properties of the normalized relevancy scores. 
\begin{lemma}
\label{lemma2}
The normalization technique upholds the following properties: (i) it maintains the conservation rule, i.e.: $\sum_j \bar{R}_j^{u^{(n)}} + \sum_k \bar{R}_k^{v^{(n)}} =  \sum_i R_i^{(n-1)}$, (ii) it bounds the relevance sum of each tensor such that:
\begin{align}
    0 \le \sum_j \bar{R}_j^{u^{(n)}}, \sum_k \bar{R}_k^{v^{(n)}} \le  \sum_i R_i^{(n-1)}
\end{align}
\end{lemma}
\begin{proof}
See supplementary.
\end{proof}

\subsection{Relevance and gradient diffusion}
Let $M$ be a Transformer model consisting of $B$ blocks, where each block $b$ is composed of self-attention, skip connections, and additional linear and normalization layers in a certain assembly. The model takes as an input a sequence of $s$ tokens, each of dimension $d$, with a special token for classification, commonly identified as the token \texttt{[CLS]}. $M$ outputs a classification probability vector $y$ of length $C$, computed using the classification token.
The self-attention module operates on a small sub-space $d_h$ of the embedding dimension $d$, where $h$ is the number of ``heads'', such that $hd_h = d$. The self-attention module is defined as follows:
\begin{align}
    \label{eq:softmax}
    \mathbf{A}^{(b)}& = softmax(\frac{\mathbf{Q}^{(b)}\cdot{\mathbf{K}^{(b)}}^T}{\sqrt{d_h}})\\
    \mathbf{O}^{(b)} &= \mathbf{A}^{(b)}\cdot\mathbf{V}^{(b)}
\end{align}
where $(\cdot)$ denotes matrix multiplication, $\mathbf{O}^{(b)} \in \mathbb{R}^{h \times s \times d_h}$ is the output of the attention module in block $b$, $\mathbf{Q}^{(b)}, \mathbf{K}^{(b)},\mathbf{V}^{(b)} \in \mathbb{R}^{h \times s \times d_h}$ are the query key and value inputs in block $b$, namely, different projections of an input $x^{(n)}$ for a self-attention module. $\mathbf{A}^{(b)} \in \mathbb{R}^{h \times s \times s}$ is the attention map of block $b$, where row $i$ represents the attention coefficients of each token in the input with respect to the token $i$.  
The $softmax$ in Eq.~\ref{eq:softmax} is applied, such that the sum of each row in each attention head of $\mathbf{A}^{(b)}$ is one.

Following the propagation procedure of relevance and gradients, each attention map $\mathbf{A}^{(b)}$ has its gradients $\nabla \mathbf{A}^{(b)}$, and relevance $R^{(n_b)}$, with respect to a target class $t$, 
where $n_b$ is the layer that corresponds to the $softmax$ operation in Eq.~\ref{eq:softmax} of block $b$, and $R^{(n_b)}$ is the layer's relevance.

The final output $\mathbf{C} \in \mathbb{R}^{s \times s}$ of our method is then defined by the weighted attention relevance:
\begin{align}
    \label{eq:modified_att}
    \mathbf{\bar{A}}^{(b)} &= I + \mathbb{E}_h (\nabla \mathbf{A}^{(b)} \odot R^{(n_b)})^+\\
    \mathbf{C} &= \mathbf{\bar{A}}^{(1)} \cdot \mathbf{\bar{A}}^{(2)} \cdot \ldots \cdot \mathbf{\bar{A}}^{(B)}
    \label{eq:modified_rollout}
\end{align}
where $\odot$ is the Hadamard product, and $\mathbb{E}_h$ is the mean across the ``heads'' dimension. In order to compute the weighted attention relevance, we consider only the positive values of the gradients-relevance multiplication, resembling positive relevance. To account for the skip connections in the Transformer block, we add the identity matrix to avoid self inhibition for each token.

For comparison, using the same notation, the rollout~\cite{abnar2020quantifying} method is given by:
\begin{align}
    \mathbf{\hat{A}}^{(b)} &= I + \mathbb{E}_h \mathbf{A}^{(b)}\\
    \label{eq:rollout}
    \text{rollout} &= \mathbf{\hat{A}}^{(1)} \cdot \mathbf{\hat{A}}^{(2)} \cdot \ldots \cdot \mathbf{\hat{A}}^{(B)}
\end{align}
We can observe that the result of rollout is fixed given an input sample, regardless of the target class to be visualized. In addition, it does not consider any signal, except for the pairwise attention scores.

\subsection{Obtaining the image relevance map}
\label{sec:att_to_rel}
The resulting explanation of our method is a matrix $\mathbf{C}$ of size $s \times s$, where $s$ represents the sequence length of the input fed to the Transformer. Each row corresponds to a relevance map for each token given the other tokens - following the attention computation convention in Eq.~\ref{eq:modified_rollout},~\ref{eq:softmax}. Since this work focuses on classification models, only the \texttt{[CLS]} token, which encapsulates the explanation of the classification, is considered. The relevance map is, therefore, derived from the row $\mathbf{C}_{\texttt{[CLS]}} \in \mathbb{R}^s$ that corresponds to the \texttt{[CLS]} token.
This row contains a score evaluating each token's influence on the classification token.

We consider only the tokens that correspond to the {actual input}, without special tokens, such as the {\texttt{[CLS]} token} and other separators. In vision models, such as ViT~\cite{dosovitskiy2020image}, the content tokens represent image patches. To obtain the final relevance map, we reshape the sequence to the patches grid size, \eg for a square image, the patch grid size is $\sqrt{s-1} \times \sqrt{s-1}$. This map is upsampled back to the size of the original image using bilinear interpolation.

\begin{figure}[t]
    \centering
    \includegraphics[width=\linewidth]{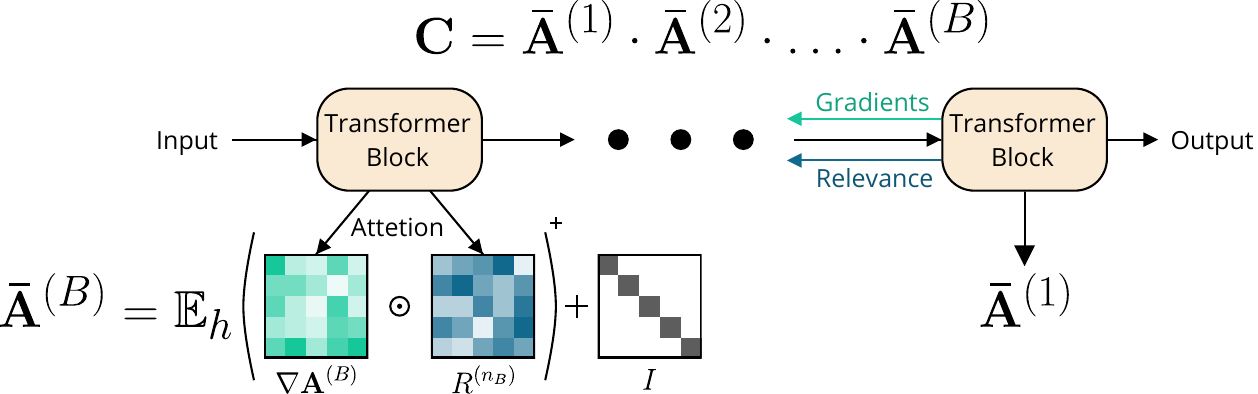}
    \caption{{Illustration of our method. Gradients and relevancies are 
    propagated through the network, and integrated to produce the final relevancy maps,
    as described in Eq.~\ref{eq:modified_att},~\ref{eq:modified_rollout}.}}
    \label{fig:method}
\end{figure}

\section{Experiments}
For the linguistic classification task, we experiment with the BERT-base~\cite{devlin2018bert} model as our classifier, assuming a maximum of 512 tokens, and a classification token \texttt{[CLS]} that is used as the input to the classification head.

For the visual classification task, we experiment with the pretrained ViT-base~\cite{dosovitskiy2020image} model, which consists of a BERT-like model. The input is a sequence of all non-overlapping patches of size $16 \times 16$ of the input image, followed by flattening and linear layers, to produce a sequence of vectors. Similar to BERT, a classification token \texttt{[CLS]} is appended at the beginning of the sequence and used for classification.

The {\bf baselines} are divided into three classes: attention-maps, relevance, and gradient-based methods. Each has different properties and assumptions over the architecture and propagation of information in the network. To best reflect the performance of different baselines, we focus on methods that are both common in the explainability literature, and applicable to the extensive tests we report in this section, \eg Black-box methods, such as Perturbation and Shapely based methods, are computationally too expensive and inherently different from the proposed method. We briefly describe each baseline in the following section and the different experiments for each domain.

The attention-map baselines include rollout~\cite{abnar2020quantifying}, following Eq.~\ref{eq:rollout}, which produces an explanation that takes into account all the attention-maps computed along 
the forward-pass. A more straightforward method is raw attention, \ie using the attention map of block $1$ to extract the relevance scores. These methods are class-agnostic by definition.

Unlike attention-map based methods, the relevance propagation methods consider the information flow through the entire network, and not just the attention maps. These baselines include Eq.~\ref{eq:lrp} and the partial application of LRP that follows~\cite{voita2019analyzing}.
As we show in our experiments, the different variants of the LRP method are practically class-agnostic, meaning the visualization remains approximately the same for different target classes.

A common class-specific explanation method is GradCAM~\cite{selvaraju2017grad}, which computes a weighted gradient-feature-map to the last convolution layer in a CNN model. The best way we found to apply GradCAM was to treat the last attention layer's \texttt{[CLS]} token as the designated feature map, without considering the \texttt{[CLS]} token itself.
We note that the last output of a Transformer model (before the classification head), is a tensor $v \in \mathbb{R}^{s \times d}$, where the first dimension relates to different input tokens, and only the \texttt{[CLS]} token is fed to the classification head. Thus, performing GradCAM on $v$ will impose a sparse gradients tensor $\nabla v$, with zeros for all tokens, except \texttt{[CLS]}.

\begin{figure*}[t]
    \setlength{\tabcolsep}{1pt} 
    \renewcommand{\arraystretch}{1} 
    \begin{center}
    \begin{tabular*}{\linewidth}{@{\extracolsep{\fill}}ccccccc}
    Input &rollout~\cite{abnar2020quantifying} & raw-attention & GradCAM~\cite{selvaraju2017grad} & LRP~\cite{binder2016layer} & partial LRP~\cite{voita2019analyzing} & Ours\\
    \includegraphics[width=0.12\linewidth]{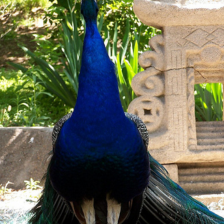} &
    \includegraphics[width=0.12\linewidth]{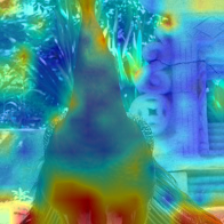} &
    \includegraphics[width=0.12\linewidth]{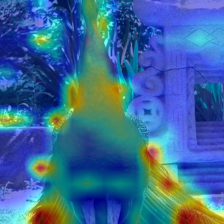} &
    \includegraphics[width=0.12\linewidth]{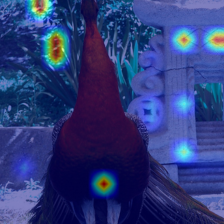} &
    \includegraphics[width=0.12\linewidth]{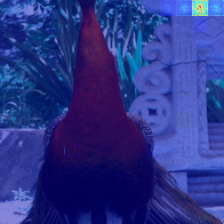} &
    \includegraphics[width=0.12\linewidth]{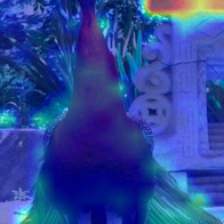} &
    \includegraphics[width=0.12\linewidth]{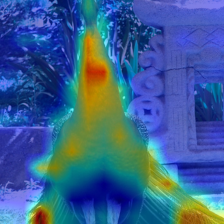}\\
    \includegraphics[width=0.12\linewidth]{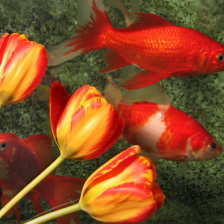} &
    \includegraphics[width=0.12\linewidth]{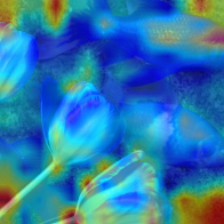} &
    \includegraphics[width=0.12\linewidth]{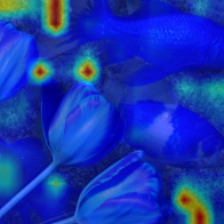} &
    \includegraphics[width=0.12\linewidth]{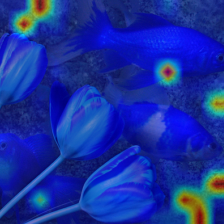} &
    \includegraphics[width=0.12\linewidth]{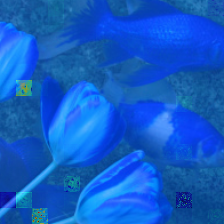} &
    \includegraphics[width=0.12\linewidth]{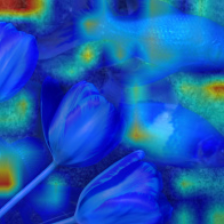} &
    \includegraphics[width=0.12\linewidth]{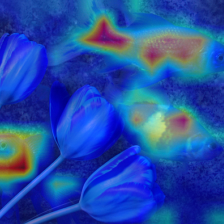}\\
    \includegraphics[width=0.12\linewidth]{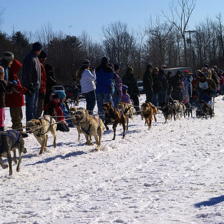} &
    \includegraphics[width=0.12\linewidth]{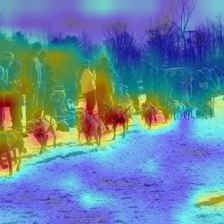} &
    \includegraphics[width=0.12\linewidth]{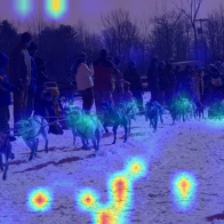} &
    \includegraphics[width=0.12\linewidth]{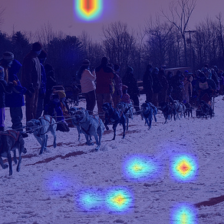} &
    \includegraphics[width=0.12\linewidth]{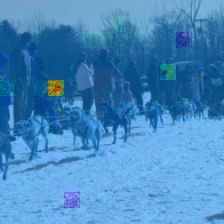} &
    \includegraphics[width=0.12\linewidth]{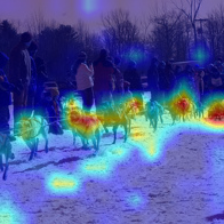} &
    \includegraphics[width=0.12\linewidth]{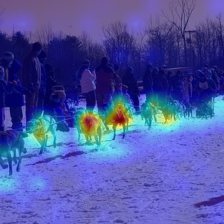}\\
    \includegraphics[width=0.12\linewidth]{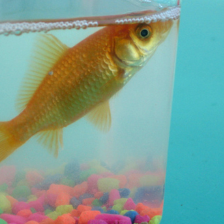} &
    \includegraphics[width=0.12\linewidth]{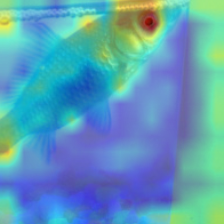} &
    \includegraphics[width=0.12\linewidth]{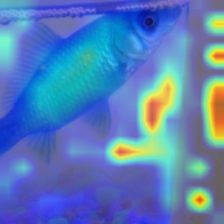} &
    \includegraphics[width=0.12\linewidth]{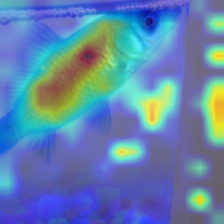} &
    \includegraphics[width=0.12\linewidth]{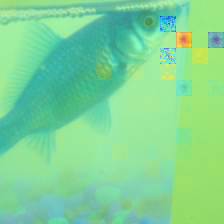} &
    \includegraphics[width=0.12\linewidth]{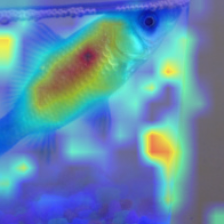} &
    \includegraphics[width=0.12\linewidth]{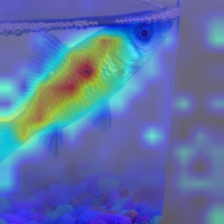}
    \end{tabular*}
    \caption{Sample results. As can be seen, our method produces more accurate visualizations.}
    \label{fig:qualitative}
    \end{center}
    \setlength{\tabcolsep}{1pt} 
    \renewcommand{\arraystretch}{1} 
    \begin{center}
    \begin{tabular*}{\linewidth}{@{\extracolsep{\fill}}ccccccc}
    Input &rollout~\cite{abnar2020quantifying} & raw-attention & GradCAM~\cite{selvaraju2017grad} & LRP~\cite{binder2016layer} & partial LRP~\cite{voita2019analyzing} & Ours\\
    \raisebox{13mm}{\multirow{2}{*}{\makecell{Dog $\rightarrow$\\\includegraphics[width=0.13\textwidth]{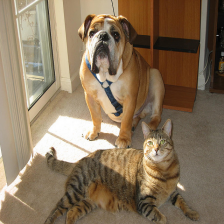}\\ Cat $\rightarrow$}}} &
    \includegraphics[width=0.12\linewidth]{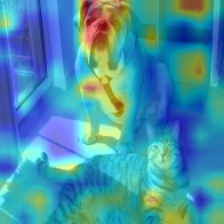} &
    \includegraphics[width=0.12\linewidth]{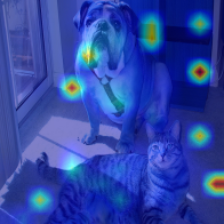} &
    \includegraphics[width=0.12\linewidth]{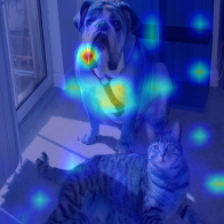} &
    \includegraphics[width=0.12\linewidth]{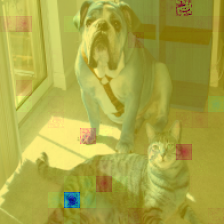} &
    \includegraphics[width=0.12\linewidth]{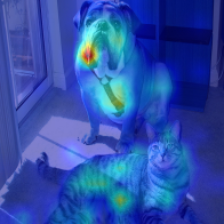} &
    \includegraphics[width=0.12\linewidth]{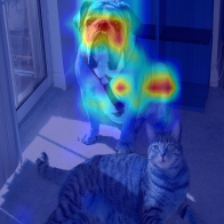}\\
    &
    \includegraphics[width=0.12\linewidth]{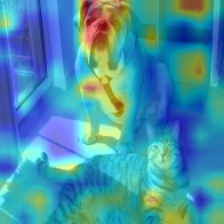} &
    \includegraphics[width=0.12\linewidth]{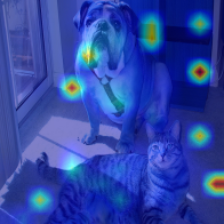} &
    \includegraphics[width=0.12\linewidth]{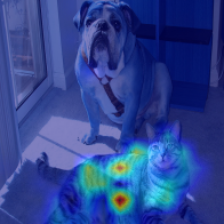} &
    \includegraphics[width=0.12\linewidth]{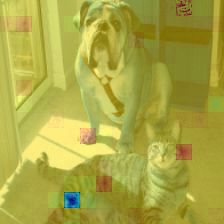} &
    \includegraphics[width=0.12\linewidth]{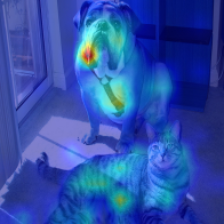} &
    \includegraphics[width=0.12\linewidth]{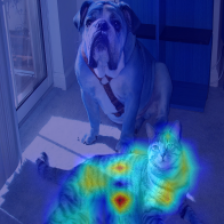}\\
    
    \raisebox{13mm}{\multirow{2}{*}{\makecell{Elephant $\rightarrow$\\\includegraphics[width=0.13\textwidth]{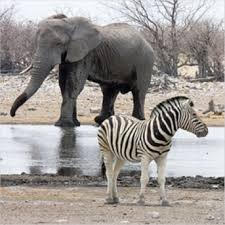}\\ Zebra $\rightarrow$}}} &
    \includegraphics[width=0.12\linewidth]{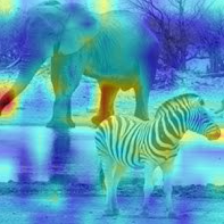} &
    \includegraphics[width=0.12\linewidth]{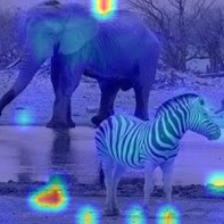} &
    \includegraphics[width=0.12\linewidth]{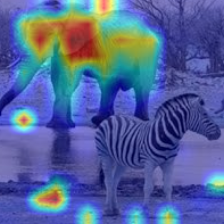} &
    \includegraphics[width=0.12\linewidth]{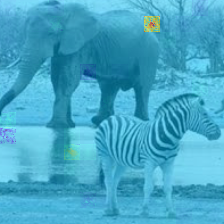} &
    \includegraphics[width=0.12\linewidth]{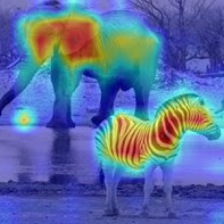} &
    \includegraphics[width=0.12\linewidth]{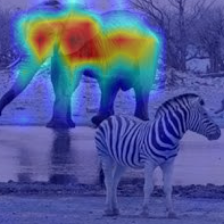}\\
    &
    \includegraphics[width=0.12\linewidth]{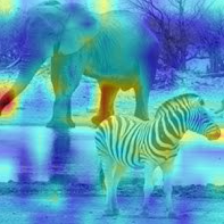} &
    \includegraphics[width=0.12\linewidth]{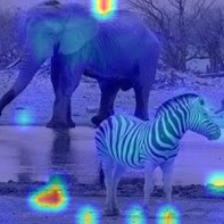} &
    \includegraphics[width=0.12\linewidth]{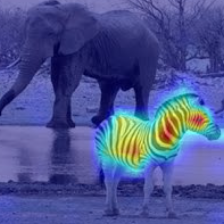} &
    \includegraphics[width=0.12\linewidth]{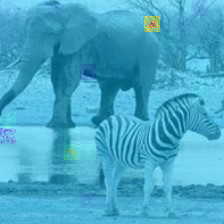} &
    \includegraphics[width=0.12\linewidth]{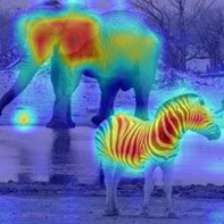} &
    \includegraphics[width=0.12\linewidth]{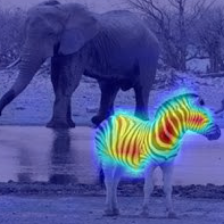}
    \end{tabular*}
    \caption{Class-specific visualizations. For each image we present results for two different classes. GradCam is the only method to generate different maps. However, its results are not convincing.}
    \label{fig:cls_spes}
    \end{center}
\end{figure*}

\smallskip
\noindent\textbf{Evaluation settings}
For the visual domain, we follow the convention of reporting results for negative and positive perturbations, as well as showing results for segmentation, which can be seen as a general case of ''The Pointing-Game''~\cite{hooker2019benchmark}. The dataset used is the validation set of ImageNet~\cite{russakovsky2015ImageNet} (ILSVRC) 2012, consisting of 50K images from 1000 classes, and an annotated subset of ImageNet called ImageNet-Segmentation~\cite{guillaumin2014ImageNet}, containing 4,276 images from 445 categories. For the linguistic domain, we follow ERASER~\cite{deyoung2019eraser} and evaluate the reasoning for the  Movies Reviews~\cite{zaidan2008modeling} dataset, which consists of 1600/200/200 reviews for train/val/test. This task is a binary sentiment analysis task. Providing explanations for question answering and entailment tasks of the other datasets in ERASER, which require input sizes of more than 512 tokens (the limit of our BERT model), is left for future work.

The positive and negative perturbation tests follow a two-stage setting. First, a pre-trained network is used for extracting visualizations for the validation set of ImageNet. Second, we gradually mask out the pixels of the input image and measure the mean top-1 accuracy of the network. In positive perturbation, pixels are masked from the highest relevance to the lowest, while in the negative version, from lowest to highest. In positive perturbation, one expects to see a steep decrease in performance, which indicates that the masked pixels are important to the classification score. In negative perturbation, a good explanation would maintain the accuracy of the model, while removing pixels that are not related to the class. In both cases, we measure the area-under-the-curve (AUC), for erasing between $10\%-90\%$ of the pixels.

The two tests can be applied to the predicted or the ground-truth class. Class-specific methods are expected to gain performance in the latter case, while class-agnostic methods would present similar performance in both tests.

The segmentation tests consider each visualization as a soft-segmentation of the image, and compare it to the ground truth segmentation of the ImageNet-Segmentation dataset. Performance is measured by (i) pixel-accuracy, obtained after thresholding each visualization by the mean value, (ii) mean-intersection-over-union (mIoU), and (iii) mean-Average-Precision (mAP), which uses the soft-segmentation to obtain a score that is threshold-agnostic.

The NLP benchmark follows the evaluation setting of ERASER~\cite{deyoung2019eraser} for rationales extraction, where the goal is to extract parts of the input that support the (ground truth) classification. The BERT model is first fine-tuned on the training set of the Movie Reviews Dataset and the various evaluation methods are applied to its results on the test set. We report the token-F1 score, which is best suited for per-token explanation (in contrast to explanations that extract an excerpt).  
To best illustrate the performance of each method, we consider a token to be part of the ``rationale'' if it is part of the top-k tokens, and show results for $k=10 \dots 80$ in steps of $10$ tokens. This way, we do not employ thresholding that may benefit some methods over others.

\subsection{Results}

\begin{table*}[t]
    \centering
    \begin{tabular*}{\linewidth}{@{\extracolsep{\fill}}llcccccc}
        \toprule
        &&rollout~\cite{abnar2020quantifying} & raw attention & GradCAM~\cite{selvaraju2017grad} & LRP~\cite{binder2016layer} & partial LRP~\cite{voita2019analyzing} & Ours\\
        \midrule
        \multirow{2}{*}{Negative} &Predicted & 53.1 & 45.55 & 41.52 & 43.49 & 50.49 & \textbf{54.16}\\
        &Target & - & - & 42.02 & 43.49 & 50.49 & \textbf{55.04} \\
        \midrule
        \multirow{2}{*}{Positive} &Predicted & 20.05 & 23.99 & 34.06 & 41.94 & 19.64 & \textbf{17.03}\\
        &Target & - & - & 33.56 & 41.93 & 19.64 & \textbf{16.04}\\
        \bottomrule
    \end{tabular*}
    \caption{Positive and Negative perturbation AUC results (percents) for the predicted and target classes, on the ImageNet~\cite{russakovsky2015ImageNet} validation set. For positive perturbation lower is better, and for negative perturbation higher is better.}
    \label{tab:perturbations}
    \medskip
    \begin{tabular*}{\linewidth}{@{\extracolsep{\fill}}lcccccc}
        \toprule
        &rollout~\cite{abnar2020quantifying} & raw attention & GradCAM~\cite{selvaraju2017grad} & LRP~\cite{binder2016layer} & partial LRP~\cite{voita2019analyzing} & Ours\\
        \midrule
        pixel accuracy & 73.54 & 67.84 & 64.44 & 51.09 & 76.31 & \textbf{79.70}\\
        mAP & 84.76 & 80.24 & 71.60 & 55.68 & 84.67 & \textbf{86.03}\\
        mIoU & 55.42 & 46.37 & 40.82 & 32.89 & 57.94 & \textbf{61.95}\\
        \bottomrule
    \end{tabular*}
    \caption{Segmentation performance on the ImageNet-segmentation~\cite{guillaumin2014ImageNet} dataset (percent). Higher is better.}
    \label{tab:segmentation}
\end{table*}

\begin{figure}[t]
    \centering
    \includegraphics[width=0.92\linewidth]{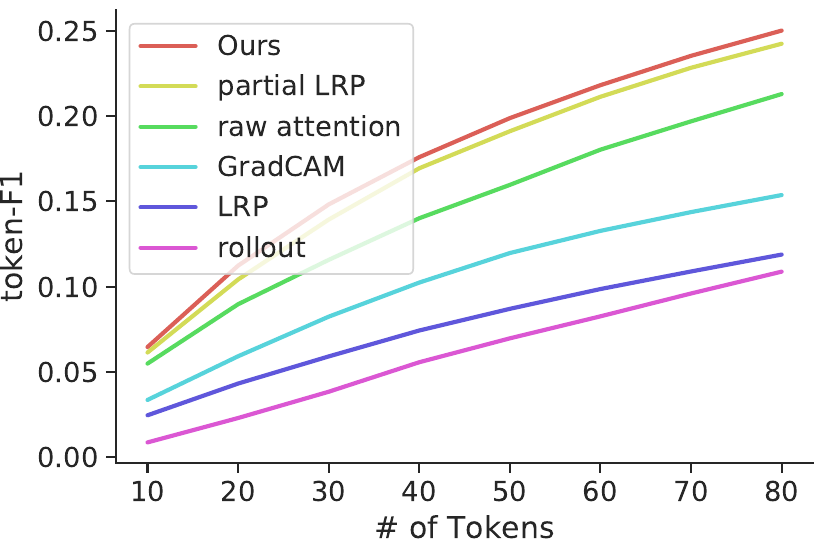}
    \caption{token-F1 scores on the Movie Reviews reasoning task.}
    \label{fig:nlp_f1}
    \medskip
    \begin{tabular*}{\linewidth}{@{\extracolsep{\fill}}lc@{~~}c@{~~}c@{~~}c@{~~}c}
        \toprule
        & \multicolumn{3}{c}{Segmentation} & \multicolumn{2}{c}{Perturbations}\\
        \cmidrule(lr){2-4}
        \cmidrule(lr){5-6}
        & Acc. & mAP & mIoU & Pos. & Neg.\\
        \midrule
        Ours w/o $\nabla \mathbf{A}^{(b)}$ & 77.66 & 85.66 & 59.88 & 18.23 & 52.88  \\
        $\nabla \mathbf{A}^{(1)}\mathbf{R}^{(n_1)}$ & 78.32 & 85.25 & 59.93& 18.01 & 52.43 \\
        $\nabla \mathbf{A}^{(B-1)}\mathbf{R}^{(n_{B-1})}$ & 60.30 & 73.63 & 39.06& 27.33 & 37.42\\
        \textbf{Ours} & \textbf{79.70} & \textbf{86.03} & \textbf{61.95}& \textbf{17.03} & \textbf{54.16}\\
        \bottomrule
    \end{tabular*}
    \captionof{table}{Performance of different variants of our method.}
    \label{tab:ablation}
\end{figure}

\noindent\textbf{Qualitative evaluation} Fig.\ref{fig:qualitative} presents a visual comparison between our method and the various baselines. As can be seen, the baseline methods produce inconsistent performance, while our method results in a much clearer and consistent visualization. 

In order to show that our method is class-specific, we show in Fig.~\ref{fig:cls_spes} images with two objects, each from a different class. As can be seen, all methods, except GradCAM, produce similar visualization for each class, while our method provides two different and accurate visualizations.

\smallskip
\noindent\textbf{Perturbation tests}
Tab.~\ref{tab:perturbations} presents the AUC obtained for both negative and positive perturbation tests, for both the predicted and the target class. As can be seen, our method achieves better performance by a large margin in both tests.
Notice that because rollout and raw attention produce constant visualization given an input image, we omit their scores in the target-class test.

\smallskip
\noindent\textbf{Segmentation}
The segmentation metrics (pixel-accuracy, mAP, and mIoU) on ImageNet-segmentation are shown in Tab.~\ref{tab:segmentation}. 
As can be seen, our method outperforms all baselines by a significant margin.

\smallskip
\noindent\textbf{Language reasoning}
Fig.~\ref{fig:nlp_f1} depicts the performance on the Movie Reviews ``rationales'' experiment, evaluating for top-K tokens, ranging from $10$ to $80$. As can be seen, while all methods benefit from increasing the amount tokens, our method consistently outperforms the baselines. See supplementary for a depiction of the obtained visualization.

\smallskip
\noindent{\bf Ablation study.} We consider three variants of our method and present their performance on the segmentation and predicted class perturbation experiments. {(i) Ours w/o $\nabla \mathbf{A}^{(b)}$, which modifies Eq.~\ref{eq:modified_att} s.t. we use $\mathbf{A}^{(b)}$ instead of $\nabla \mathbf{A}^{(b)}$, (ii) $\nabla \mathbf{A}^{(1)}\mathbf{R}^{(n_1)}$, \ie disregarding rollout in Eq.~\ref{eq:modified_rollout}, and using our method only on block $1$, which is the block closest to the output, 
and (iii) $\nabla \mathbf{A}^{(B-1)}\mathbf{R}^{(n_{B-1})}$ which similar to (ii), only for block $B-1$ which is closer to the input.}

{As can be seen in Tab.~\ref{tab:ablation} the ablation $\nabla \mathbf{A}^{(1)}\mathbf{R}^{(n_1)}$ in which one removes the rollout component, \ie, Eq.~\ref{eq:modified_rollout}, while keeping the relevance and gradient integration, and only considering the last attention layer, leads to a moderate drop in performance. Out of the two single block visualizations ((ii), and (iii)), the combined attention gradient and relevancy at the $b=1$ block, which is the closest to the output, is more informative than the block closest to the input. This is the same block that is being used for the raw-attention, partial LRP, and the GradCAM methods. The ablation that considers only this block outperforms these methods, indicating that the advantage of our method stems mostly from the combination of relevancy as we compute it and attention-map gradients.}


\section{Conclusions}

The self-attention mechanism links each of the tokens to the \texttt{[CLS]} token. The strength of this attention link can be intuitively considered as an indicator of the contribution of each token to the classification. While this is intuitive, given the term ``attention'', the attention values reflect only one aspect of the Transformer network or even of the self-attention head. As we demonstrate, both when using a fine-tuned BERT model for NLP and with the ViT model, attentions lead to fragmented and non-competitive explanations.

Despite this shortcoming and the importance of Transformer models, the literature with regards to interpretability of Transformers is sparse. In comparison to CNNs, multiple factors prevent methods developed for other forms of neural networks (not including the slower black-box methods) from being applied. These include the use of non-positive activation functions, the frequent use of skip connections, and the challenge of modeling the matrix multiplication that is used in self-attention.

Our method provides specific solutions to each of these challenges and obtains state-of-the-art results when compared to the methods of the Transformer literature, the LRP method, and the GradCam method, which can be applied directly to Transformers. 

\section*{Acknowledgment}
This project has received funding from the European Research Council (ERC) under the European Unions Horizon 2020 research and innovation programme (grant ERC CoG 725974). 
The contribution of the first author is part of a Master thesis research conducted at Tel Aviv University.

{\small
\bibliographystyle{ieee_fullname}
\bibliography{explainability}
}

\clearpage
\onecolumn
\appendix
\setcounter{equation}{0}
\setcounter{figure}{0}

\section{Details of the various Baselines}

\paragraph{\textbf{GradCAM}}As mentioned in Sec.~4, we consider the last attention layer (closest to the output) - namely $\mathbf{A}^{(1)}$. This results in a feature-map of size $h \times s \times s$. Following the process described in Sec.~3.4, we take only the \texttt{[CLS]} token's row (without the \texttt{[CLS]} token's column), and reshape to the patches grid size $h_p \times w_p$. This results in a feature-map similar to the 2D feature-map used for GradCAM, where the number of channels, in this case, is $h$, and the height and width are $h_p$ and $w_p$. The reason we use the last attention layer is because of the sparse gradients issue described in Sec.~4.

\paragraph{\textbf{raw-attention}} The raw-attention method visualizes the last attention layer (closest to the output) - namely $\mathbf{A}^{(1)}$. It follows the process described in Sec.~3.4 to extract the final output.

\paragraph{\textbf{LRP}} In this method, we propagate relevance up to the input image, following the propagation rules of LRP (not our modified rules and normalizations).

\paragraph{\textbf{partial-LRP}} Following~[42], we visualize an intermediate relevance map, more specifically, we visualize the last attention-map's relevance, namely $R^{(n_1)}$, using LRP propagation rules.

\paragraph{\textbf{rollout}} We follow Eq.~16.

\clearpage
\section{Proofs for Lemmas}
Given two tensors $u$ and $v$, we compute the relevance propagation of binary operators (\ie, operators that process two operands) as follows:
\begin{align}
    R_j^{u^{(n)}} &= \mathcal{G}(u,v,R^{(n-1)}) \notag\\
    R_k^{v^{(n)}} &= \mathcal{G}(v,u,R^{(n-1)}) \label{eq:uvr}
\end{align}
where $R_j^{u^{(n)}}$ and $R_k^{v^{(n)}}$ are the relevances for $u$ and $v$ respectively. 

The following lemma shows that for the case of addition, the conservation rule is preserved, i.e., 
\begin{equation}
\sum_j R_j^{u^{(n)}} + \sum_k R_k^{v^{(n)}} = \sum_i R_i^{(n-1)}
\label{eq:binarycr}.
\end{equation}

However, this is not the case for matrix multiplication.

\begin{lemma}
\label{lemma1}
Given two tensors $u$ and $v$, consider the relevances that are computed according to Eq.~\ref{eq:uvr}. Then, (i) if layer $L^{(n)}$ adds the two tensors, i.e.,  $L^{(n)}(u,v) = u+v$ then the conservation rule of Eq.~\ref{eq:binarycr} is maintained. (ii) if the layer performs matrix multiplication $L^{(n)}(u,v) = uv$, then Eq.~\ref{eq:binarycr} does not hold in general.
\end{lemma}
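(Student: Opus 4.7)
The plan is to expand $R^{u^{(n)}}$ and $R^{v^{(n)}}$ via the generic DTD formula in Eq.~\ref{eq:generic_rel}, substitute the explicit Jacobians for each of the two operators, and then sum. The informal driver, matching the hint in the statement, is that in an add layer each output coordinate depends on exactly one coordinate of $u$ and one of $v$, so the contributions to $R^u$ and $R^v$ decouple and cleanly partition each $R_j^{(n-1)}$, whereas in matrix multiplication each output entry depends on an entire row of $u$ and an entire column of $v$, and this coupling causes the same output relevance to be fully reconstructed once through $u$ and once more through $v$.

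For part (i), I would compute $\partial L_i^{(n)}/\partial u_j = \delta_{ij}$ and $\partial L_i^{(n)}/\partial v_k = \delta_{ik}$. Substituting these into the two instances of Eq.~\ref{eq:uvr}, the Kronecker deltas collapse the outer sum over $i$ to a single term, and the denominator in Eq.~\ref{eq:generic_rel} evaluates at that term to $u_j + v_j$, giving
\begin{align*}
R_j^{u^{(n)}} = \frac{u_j}{u_j+v_j}\, R_j^{(n-1)}, \qquad R_j^{v^{(n)}} = \frac{v_j}{u_j+v_j}\, R_j^{(n-1)}.
\end{align*}
These add componentwise to $R_j^{(n-1)}$, and summing over $j$ yields Eq.~\ref{eq:binarycr}.

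For part (ii), with $L_{ij}^{(n)}(u,v) = \sum_r u_{ir} v_{rj}$ I would compute $\partial L_{ij}/\partial u_{pr} = \delta_{ip}\, v_{rj}$ and $\partial L_{ij}/\partial v_{rq} = u_{ir}\, \delta_{jq}$. The crucial symmetry is that both contractions $\sum_{p,r} u_{pr}\, \partial L_{ij}/\partial u_{pr}$ and $\sum_{r,q} v_{rq}\, \partial L_{ij}/\partial v_{rq}$ evaluate to the \emph{same} quantity $L_{ij}^{(n)}(u,v)$. Plugging back into Eq.~\ref{eq:uvr} and summing, each of $\sum_{p,r} R_{pr}^{u^{(n)}}$ and $\sum_{r,q} R_{rq}^{v^{(n)}}$ separately telescopes to $\sum_{i,j} R_{ij}^{(n-1)}$, so their sum equals $2\sum_{i,j} R_{ij}^{(n-1)}$. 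Exhibiting any concrete pair of matrices with positive entries and a nonzero incoming relevance then certifies that Eq.~\ref{eq:binarycr} is strictly violated in general.

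The only non-mechanical point, and the one I would flag as the main obstacle, is pinning down the convention for the normalizer $\sum_{j'} L_{j'}^{(n)}(\mathbf{X},\mathbf{Y})$ of Eq.~\ref{eq:generic_rel} so that, in the bilinear setting considered here, it reduces to the right per-output denominator $L_i^{(n)}(u,v)$ (which is what makes the linear case consistent with standard LRP and what drives the clean cancellation in part~(i) and the exact factor of two in part~(ii)). Once that convention is locked in, both parts reduce to a direct chain-rule computation and a count of how many times each output relevance is redistributed back.
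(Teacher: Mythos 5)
Your proposal is correct and follows essentially the same route as the paper's own proof: substitute the operator Jacobians into the generic propagation rule of Eq.~\ref{eq:generic_rel}, observe that for addition the two streams partition the incoming relevance, and that for matrix multiplication the contraction of each operand with its Jacobian reconstructs the full output so that each stream separately telescopes to $\sum_i R_i^{(n-1)}$, yielding the factor of two. The normalizer convention you flag is resolved exactly as you propose (per-output Taylor reconstruction, as in the paper's Eq.~\ref{eq:lrp} and its part~(ii) computation); your per-coordinate identity $R_j^{u}+R_j^{v}=R_j^{(n-1)}$ in part~(i) and your remark about needing $\sum_i R_i^{(n-1)}\ne 0$ in part~(ii) are minor refinements of the same argument.
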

\begin{proof}
For part (i), we note that the number of elements in $u$ equals the number of elements in $v$, therefore $k=j$, and we can write Eq.~\ref{eq:binarycr} following the definition of $\mathcal{G}$:
\begin{align}
    \nonumber
    &\sum_j\sum_i u_j\frac{\partial (u_i+v_i)}{\partial u_j} \frac{R^{(n-1)}_i}{u_{i}+v_{i}} + \sum_j\sum_i v_j\frac{\partial (u_i+v_i)}{\partial v_j} \frac{R^{(n-1)}_i}{u_{i}+v_{i}}\\
    \nonumber
    &= \sum_j \frac{u_j}{u_j+v_j} R^{(n-1)}_j + \sum_j \frac{v_j}{u_j+v_j} R^{(n-1)}_j\\
    &= \sum_j \frac{u_j + v_j}{u_j+v_j} R^{(n-1)}_j = \sum_j R_j^{(n-1)}
\end{align}
note that, in this case, it is possible that $\sum_j R_j^{u^{(n)}}\ne\sum_j R_j^{v^{(n)}}$.

{As shown in the main text, while the sum of two tensors maintains the conservation rule, their values may explode. Consider $u = \m{e^a\\e^b}$, $v = \m{1-e^a\\1-e^b}$ and $R = \m{1\\1}$, following the definition of $\mathcal{G}$ we have:
\begin{align}
    \nonumber
    R_j^{u^{(n)}} &=\sum_i u_j\frac{\partial (u_i+v_i)}{\partial u_j} \frac{R^{(n-1)}_i}{u_{i}+v_{i}} = \frac{u_j}{u_j+v_j} R^{(n-1)}_j, \quad  R_j^{v^{(n)}} = \frac{v_j}{u_j+v_j} R^{(n-1)}_j\\
    R^u &= \m{\frac{e^a}{e^a - e^a + 1}1\\\frac{e^b}{e^b - e^b + 1}1} = \m{e^a\\e^b}, \quad  R^v = \m{1-e^a\\1-e^b}
\end{align}
which causes numerical instability.}

For part (ii), in the case of matrix multiplication between $u$ and $v$, where $u \in \mathbb{R}^{k,m},\, v \in \mathbb{R}^{m,l}$,  we will show that: $\sum_k \sum_m R^{u^{(n)}}_{k, m} = \sum_m \sum_l R^{v^{(n)}}_{m, l}= \sum_l \sum_k R^{(n-1)}_{k,l}$, which invalidates the conservation rule:
\begin{align}
    R^{u^{(n)}}_{k, m} &= \sum_l u_{k,m} \frac{\partial(uv)_{k,l}}{\partial u_{k,m}} \frac{R^{(n-1)}_{k,l}}{\sum_{m^\prime}u_{k,m^\prime}v_{m^\prime l}} 
    = \sum_l \frac{u_{k,m}v_{m,l}}{\sum_{m^\prime}u_{k,m^\prime}v_{m^\prime l}} R^{(n-1)}_{k,l}\\
    R^{v^{(n)}}_{m, l} &= \sum_k v_{m,l} \frac{\partial(uv)_{k,l}}{\partial v_{m,l}} \frac{R^{(n-1)}_{k,l}}{\sum_{m^\prime}u_{k,m^\prime}v_{m^\prime l}}
    = \sum_k \frac{u_{k,m}v_{m,l}}{\sum_{m^\prime}u_{k,m^\prime}v_{m^\prime l}} R^{(n-1)}_{k,l}\\
    \nonumber
    \sum_k \sum_m R^{u^{(n)}}_{k, m} + \sum_m \sum_l R^{v^{(n)}}_{m, l} &= \sum_k \sum_m \sum_l \frac{u_{k,m}v_{m,l}}{\sum_{m^\prime}u_{k,m^\prime}v_{m^\prime l}} R^{(n-1)}_{k,l} + \sum_m \sum_l \sum_k \frac{u_{k,m}v_{m,l}}{\sum_{m^\prime}u_{k,m^\prime}v_{m^\prime l}} R^{(n-1)}_{k,l}\\
    &= \sum_k \sum_l \frac{\sum_m u_{k,m}v_{m,l}}{\sum_{m^\prime}u_{k,m^\prime}v_{m^\prime l}} R^{(n-1)}_{k,l} + \sum_l \sum_k \frac{\sum_m u_{k,m}v_{m,l}}{\sum_{m^\prime}u_{k,m^\prime}v_{m^\prime l}} R^{(n-1)}_{k,l} \\
    &= 2 \sum_l \sum_k R^{(n-1)}_{k,l}
\end{align}
\end{proof}

To address the lack of conservation in the attention mechanism, which employs multiplication, and the numerical issues of the skip connections, our method applies a normalization to $R_j^{u^{(n)}}$ and $R_k^{v^{(n)}}$:
\begin{align}
    \nonumber
    \bar{R}_j^{u^{(n)}} &= R_j^{u^{(n)}} \frac{\abs{\sum_j {R_j^{u^{(n)}}}}}{\abs{\sum_j {R_j^{u^{(n)}}}} + \abs{\sum_k{R_k^{v^{(n)}}}}} \cdot \frac{\sum_i R_i^{(n-1)}}{\sum_j R_j^{u^{(n)}}}\\
    \bar{R}_k^{v^{(n)}} &= R_k^{v^{(n)}} \frac{\abs{\sum_k {R_k^{v^{(n)}}}}}{\abs{\sum_j {R_j^{u^{(n)}}}} + \abs{\sum_k {R_k^{v^{(n)}}}}} \cdot \frac{\sum_i R_i^{(n-1)}}{\sum_k R_k^{v^{(n)}}}
    \label{eq:norm}
\end{align}

\begin{lemma}
\label{lemma2}
The normalization technique upholds the following properties: (i) it maintains the conservation rule, i.e.: $\sum_j \bar{R}_j^{u^{(n)}} + \sum_k \bar{R}_k^{v^{(n)}} =  \sum_i R_i^{(n-1)}$, (ii) it bounds the relevance sum of each tensor such that:
\begin{align}
    0 \le \sum_j \bar{R}_j^{u^{(n)}}, \sum_k \bar{R}_k^{v^{(n)}} \le  \sum_i R_i^{(n-1)}
\end{align}
\end{lemma}
\begin{proof}
For part (i), it holds that:
\begin{align}
    &\sum_j \bar{R}_j^{u^{(n)}} + \sum_k \bar{R}_k^{v^{(n)}}\\
    \nonumber
    &=\sum_j R_j^{u^{(n)}} \frac{\abs{\sum_j {R_j^{u^{(n)}}}}}{\abs{\sum_j {R_j^{u^{(n)}}}} + \abs{\sum_k{R_k^{v^{(n)}}}}} \cdot \frac{\sum_i R_i^{(n-1)}}{\sum_j R_j^{u^{(n)}}} \\
    &+ \sum_k R_k^{v^{(n)}} \frac{\abs{\sum_k {R_k^{v^{(n)}}}}}{\abs{\sum_j {R_j^{u^{(n)}}}} + \abs{\sum_k {R_k^{v^{(n)}}}}} \cdot \frac{\sum_i R_i^{(n-1)}}{\sum_k R_k^{v^{(n)}}}\\
    &= \frac{\abs{\sum_j {R_j^{u^{(n)}}}}}{\abs{\sum_j {R_j^{u^{(n)}}}} + \abs{\sum_k{R_k^{v^{(n)}}}}} \cdot \sum_i R_i^{(n-1)} + \frac{\abs{\sum_k {R_k^{v^{(n)}}}}}{\abs{\sum_j {R_j^{u^{(n)}}}} + \abs{\sum_k {R_k^{v^{(n)}}}}} \cdot \sum_i R_i^{(n-1)}\\
    &= \frac{\abs{\sum_j {R_j^{u^{(n)}}}} + \abs{\sum_k {R_k^{v^{(n)}}}}}{\abs{\sum_j {R_j^{u^{(n)}}}} + \abs{\sum_k{R_k^{v^{(n)}}}}} \cdot \sum_i R_i^{(n-1)} = \sum_i R_i^{(n-1)}
\end{align}

For part (ii) it is trivial to see that we weigh each tensor according to its relative absolute-value contribution:
\begin{align}
    \sum_j \bar{R}_j^{u^{(n)}} &= \sum_j R_j^{u^{(n)}} \frac{\abs{\sum_j {R_j^{u^{(n)}}}}}{\abs{\sum_j {R_j^{u^{(n)}}}} + \abs{\sum_k{R_k^{v^{(n)}}}}} \cdot \frac{\sum_i R_i^{(n-1)}}{\sum_j R_j^{u^{(n)}}}\\
    & = \frac{\abs{\sum_j {R_j^{u^{(n)}}}}}{\abs{\sum_j {R_j^{u^{(n)}}}} + \abs{\sum_k{R_k^{v^{(n)}}}}} \cdot \sum_i R_i^{(n-1)}
\end{align}
we see that:
\begin{align}
    0 \le \frac{\abs{\sum_j {R_j^{u^{(n)}}}}}{\abs{\sum_j {R_j^{u^{(n)}}}} + \abs{\sum_k{R_k^{v^{(n)}}}}} \le 1
\end{align}
therefore:
\begin{align}
    0 \le \sum_j \bar{R}_j^{u^{(n)}}, \sum_k \bar{R}_k^{v^{(n)}} \le  \sum_i R_i^{(n-1)}
\end{align}
\end{proof}

\clearpage
\section{Visualizations - Multiple-class Images}
\begin{figure}[h]
    \centering
    \begin{tabular*}{\linewidth}{@{\extracolsep{\fill}}ccccccc}
    Input &rollout & raw-attention & GradCAM & LRP & partial LRP & Ours\\
    \raisebox{9mm}{\multirow{2}{*}{\makecell{Dog $\rightarrow$\\\includegraphics[width=0.13\textwidth]{figures/class/catdog.png}\\ Cat $\rightarrow$}}} &
    \includegraphics[width=0.12\linewidth]{figures/class/dog_rollout.png} &
    \includegraphics[width=0.12\linewidth]{figures/class/dog_last_layer_attn.png} &
    \includegraphics[width=0.12\linewidth]{figures/class/dog_attn_gradcam.png} &
    \includegraphics[width=0.12\linewidth]{figures/class/dog_full_LRP.png} &
    \includegraphics[width=0.12\linewidth]{figures/class/dog_LRP_last_layer.png} &
    \includegraphics[width=0.12\linewidth]{figures/class/dog_grad_LRP.png}\\
    &
    \includegraphics[width=0.12\linewidth]{figures/class/cat_rollout.png} &
    \includegraphics[width=0.12\linewidth]{figures/class/cat_last_layer_attn.png} &
    \includegraphics[width=0.12\linewidth]{figures/class/cat_attn_gradcam.png} &
    \includegraphics[width=0.12\linewidth]{figures/class/cat_full_LRP.png} &
    \includegraphics[width=0.12\linewidth]{figures/class/cat_LRP_last_layer.png} &
    \includegraphics[width=0.12\linewidth]{figures/class/cat_grad_LRP.png}\\
    
    \raisebox{9mm}{\multirow{2}{*}{\makecell{Elephant $\rightarrow$\\\includegraphics[width=0.13\textwidth]{figures/class/el2.png}\\ Zebra $\rightarrow$}}} &
    \includegraphics[width=0.12\linewidth]{figures/class/elephant_rollout.png} &
    \includegraphics[width=0.12\linewidth]{figures/class/elephant_last_layer_attn.png} &
    \includegraphics[width=0.12\linewidth]{figures/class/elephant_attn_gradcam.png} &
    \includegraphics[width=0.12\linewidth]{figures/class/elephant_full_LRP.png} &
    \includegraphics[width=0.12\linewidth]{figures/class/elephant_LRP_last_layer.png} &
    \includegraphics[width=0.12\linewidth]{figures/class/elephant_grad_LRP.png}\\
    &
    \includegraphics[width=0.12\linewidth]{figures/class/zebra_rollout.png} &
    \includegraphics[width=0.12\linewidth]{figures/class/zebra_last_layer_attn.png} &
    \includegraphics[width=0.12\linewidth]{figures/class/zebra_attn_gradcam.png} &
    \includegraphics[width=0.12\linewidth]{figures/class/zebra_full_LRP.png} &
    \includegraphics[width=0.12\linewidth]{figures/class/zebra_LRP_last_layer.png} &
    \includegraphics[width=0.12\linewidth]{figures/class/zebra_grad_LRP.png}\\
    
    \raisebox{9mm}{\multirow{2}{*}{\makecell{Elephant $\rightarrow$\\\includegraphics[width=0.13\textwidth]{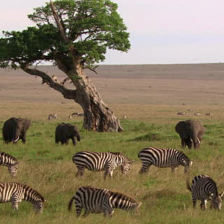}\\ Zebra $\rightarrow$}}} &
    \includegraphics[width=0.12\linewidth]{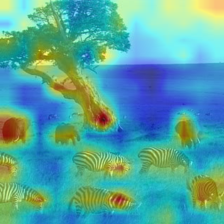} &
    \includegraphics[width=0.12\linewidth]{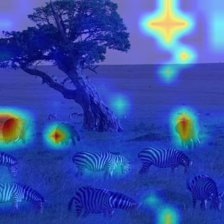} &
    \includegraphics[width=0.12\linewidth]{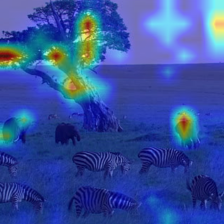} &
    \includegraphics[width=0.12\linewidth]{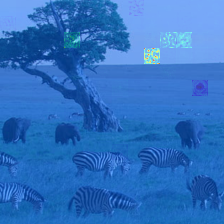} &
    \includegraphics[width=0.12\linewidth]{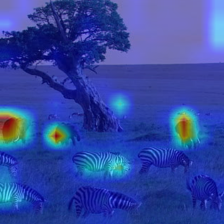} &
    \includegraphics[width=0.12\linewidth]{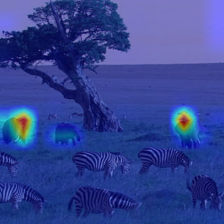}\\
    &
    \includegraphics[width=0.12\linewidth]{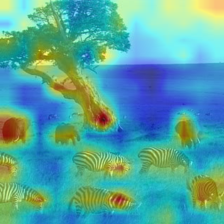} &
    \includegraphics[width=0.12\linewidth]{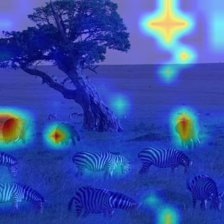} &
    \includegraphics[width=0.12\linewidth]{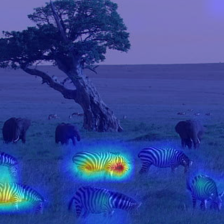} &
    \includegraphics[width=0.12\linewidth]{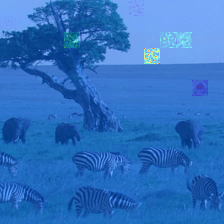} &
    \includegraphics[width=0.12\linewidth]{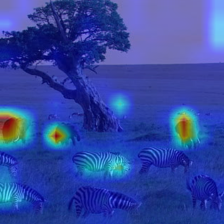} &
    \includegraphics[width=0.12\linewidth]{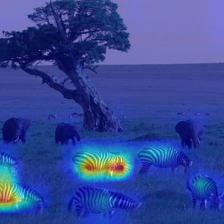}\\
    
    \raisebox{9mm}{\multirow{2}{*}{\makecell{Elephant $\rightarrow$\\\includegraphics[width=0.13\textwidth]{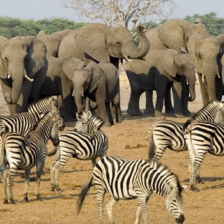}\\ Zebra $\rightarrow$}}} &
    \includegraphics[width=0.12\linewidth]{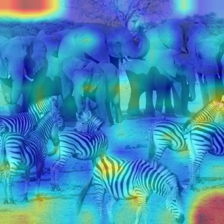} &
    \includegraphics[width=0.12\linewidth]{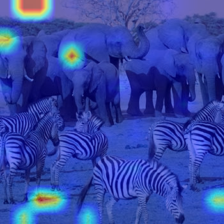} &
    \includegraphics[width=0.12\linewidth]{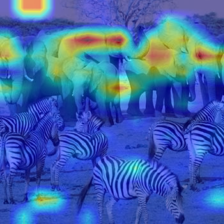} &
    \includegraphics[width=0.12\linewidth]{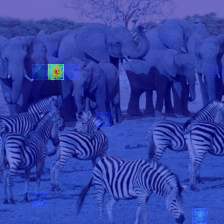} &
    \includegraphics[width=0.12\linewidth]{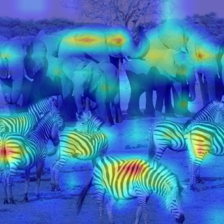} &
    \includegraphics[width=0.12\linewidth]{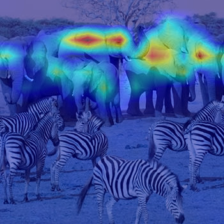}\\
    &
    \includegraphics[width=0.12\linewidth]{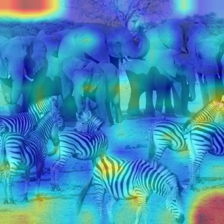} &
    \includegraphics[width=0.12\linewidth]{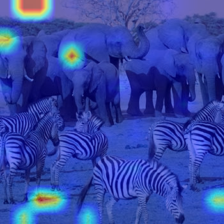} &
    \includegraphics[width=0.12\linewidth]{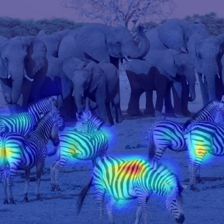} &
    \includegraphics[width=0.12\linewidth]{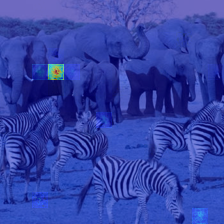} &
    \includegraphics[width=0.12\linewidth]{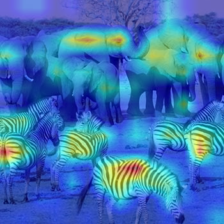} &
    \includegraphics[width=0.12\linewidth]{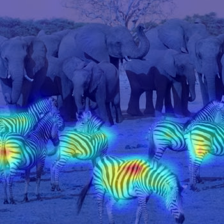}
    \end{tabular*}
    \caption{Multiple-class visualization. For each input image, we visualize two different classes. As can be seen, only our method and GradCAM produce class-specific visualisations, where our method has fewer artifacts, and captures the objects more completely.}
    \label{fig:multi1}
\end{figure}

\clearpage
\begin{figure}[h]
    \centering
    \begin{tabular*}{\linewidth}{@{\extracolsep{\fill}}ccccccc}
    Input &rollout & raw-attention & GradCAM & LRP & partial LRP & Ours\\
    \raisebox{9mm}{\multirow{2}{*}{\makecell{Elephant $\rightarrow$\\\includegraphics[width=0.13\textwidth]{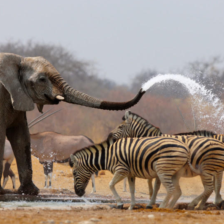}\\ Zebra $\rightarrow$}}} &
    \includegraphics[width=0.12\linewidth]{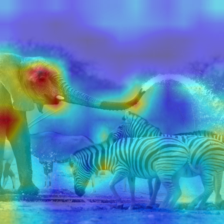} &
    \includegraphics[width=0.12\linewidth]{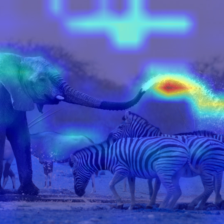} &
    \includegraphics[width=0.12\linewidth]{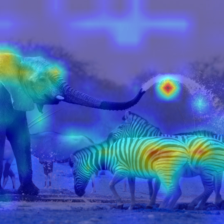} &
    \includegraphics[width=0.12\linewidth]{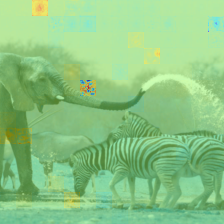} &
    \includegraphics[width=0.12\linewidth]{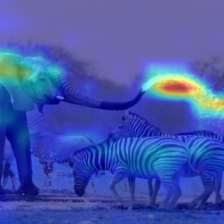} &
    \includegraphics[width=0.12\linewidth]{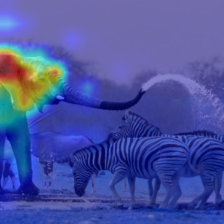}\\
    &
    \includegraphics[width=0.12\linewidth]{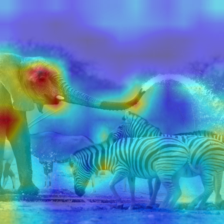} &
    \includegraphics[width=0.12\linewidth]{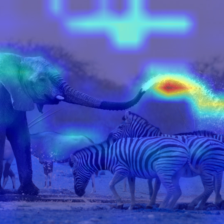} &
    \includegraphics[width=0.12\linewidth]{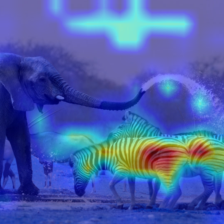} &
    \includegraphics[width=0.12\linewidth]{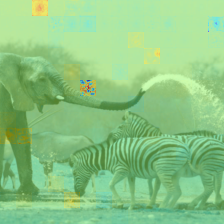} &
    \includegraphics[width=0.12\linewidth]{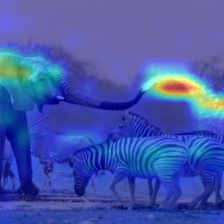} &
    \includegraphics[width=0.12\linewidth]{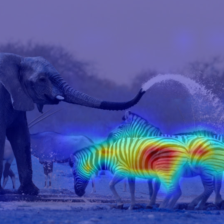}\\
    
    \raisebox{9mm}{\multirow{2}{*}{\makecell{Elephant $\rightarrow$\\\includegraphics[width=0.13\textwidth]{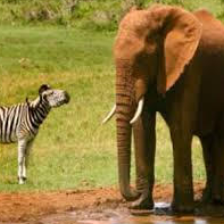}\\ Zebra $\rightarrow$}}} &
    \includegraphics[width=0.12\linewidth]{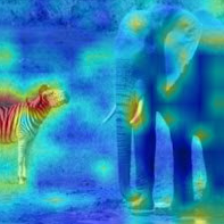} &
    \includegraphics[width=0.12\linewidth]{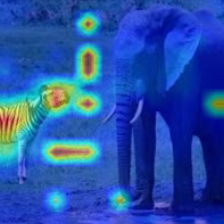} &
    \includegraphics[width=0.12\linewidth]{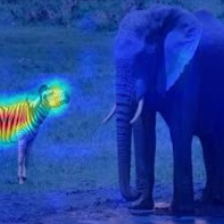} &
    \includegraphics[width=0.12\linewidth]{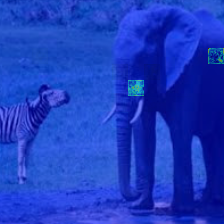} &
    \includegraphics[width=0.12\linewidth]{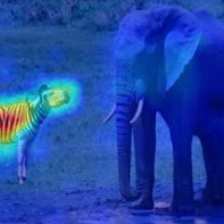} &
    \includegraphics[width=0.12\linewidth]{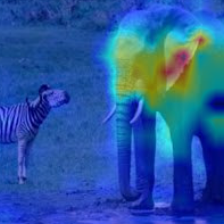}\\
    &
    \includegraphics[width=0.12\linewidth]{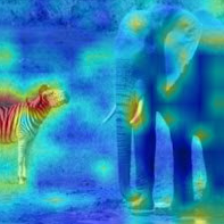} &
    \includegraphics[width=0.12\linewidth]{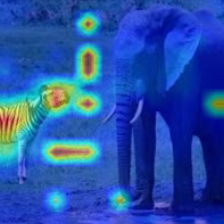} &
    \includegraphics[width=0.12\linewidth]{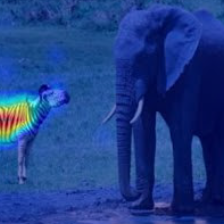} &
    \includegraphics[width=0.12\linewidth]{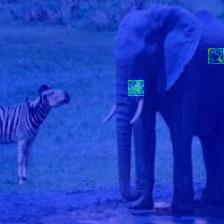} &
    \includegraphics[width=0.12\linewidth]{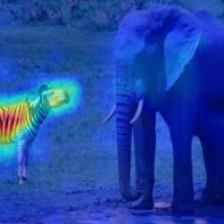} &
    \includegraphics[width=0.12\linewidth]{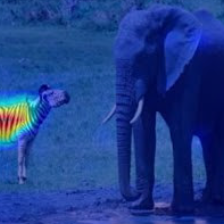}\\
    
    \raisebox{9mm}{\multirow{2}{*}{\makecell{Dog $\rightarrow$\\\includegraphics[width=0.13\textwidth]{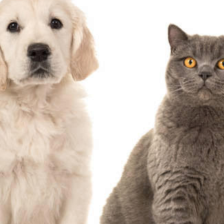}\\ Cat $\rightarrow$}}} &
    \includegraphics[width=0.12\linewidth]{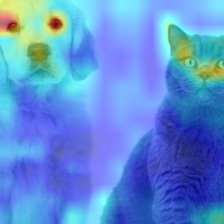} &
    \includegraphics[width=0.12\linewidth]{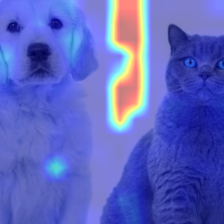} &
    \includegraphics[width=0.12\linewidth]{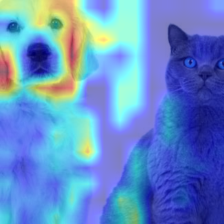} &
    \includegraphics[width=0.12\linewidth]{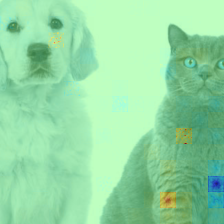} &
    \includegraphics[width=0.12\linewidth]{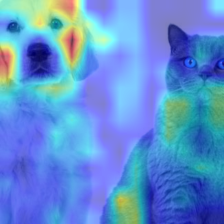} &
    \includegraphics[width=0.12\linewidth]{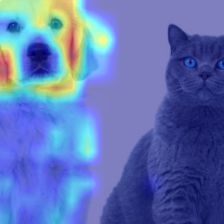}\\
    &
    \includegraphics[width=0.12\linewidth]{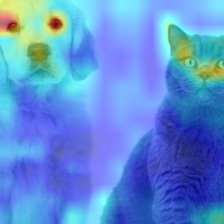} &
    \includegraphics[width=0.12\linewidth]{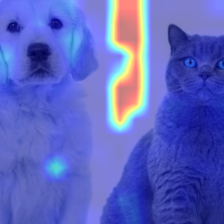} &
    \includegraphics[width=0.12\linewidth]{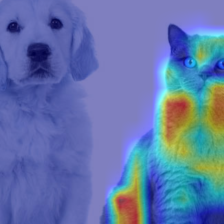} &
    \includegraphics[width=0.12\linewidth]{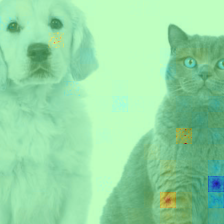} &
    \includegraphics[width=0.12\linewidth]{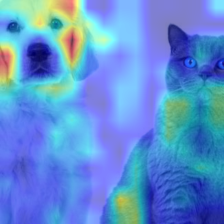} &
    \includegraphics[width=0.12\linewidth]{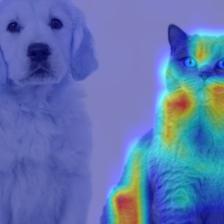}
    \end{tabular*}
    \caption{Multiple-class visualization. For each input image, we visualize two different classes. As can be seen, only our method and GradCAM produce class-specific visualisations, where our method has fewer artifacts, and captures the objects more completely.}
    \label{fig:multi2}
\end{figure}

\clearpage
\section{Visualizations - Single-class Images}
\begin{figure}[h]
\centering
\begin{tabular*}{\linewidth}{l}
~~~~~~~~~~~~~~~~Input~~~~~~~~~~~~~~~~rollout~~~~~~~~~raw-attention~~~~~~~GradCAM~~~~~~~~~~~~~LRP~~~~~~~~~~~~~partial LRP~~~~~~~~~~~~Ours\\
\end{tabular*}
\includegraphics[width=.9\linewidth]{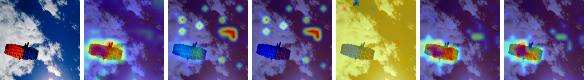}\\
\includegraphics[width=.9\linewidth]{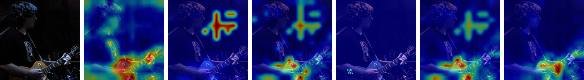}\\
\includegraphics[width=.9\linewidth]{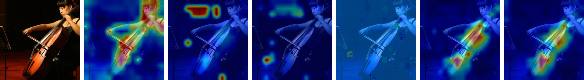}\\
\includegraphics[width=.9\linewidth]{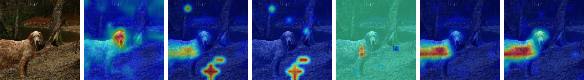}\\
\includegraphics[width=.9\linewidth]{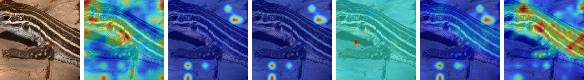}\\
\includegraphics[width=.9\linewidth]{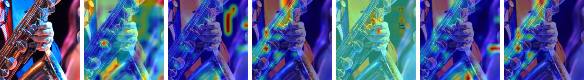}\\
\includegraphics[width=.9\linewidth]{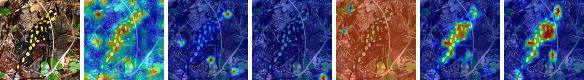}\\
\includegraphics[width=.9\linewidth]{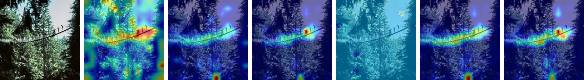}\\
\includegraphics[width=.9\linewidth]{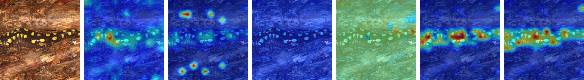}\\
\caption{Sample images from ImageNet val-set.}
\label{fig:single_class_0}
\end{figure}

\clearpage
\begin{figure}[h]
\centering
\begin{tabular*}{\linewidth}{l}
~~~~~~~~~~~~~~~~Input~~~~~~~~~~~~~~~~rollout~~~~~~~~~raw-attention~~~~~~~GradCAM~~~~~~~~~~~~~LRP~~~~~~~~~~~~~partial LRP~~~~~~~~~~~~Ours\\
\end{tabular*}
\includegraphics[width=.9\linewidth]{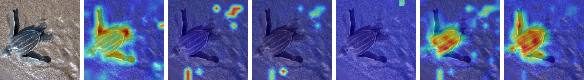}\\
\includegraphics[width=.9\linewidth]{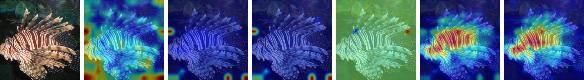}\\
\includegraphics[width=.9\linewidth]{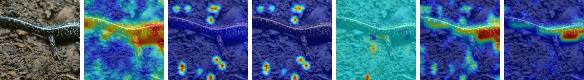}\\
\includegraphics[width=.9\linewidth]{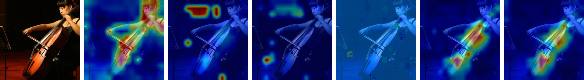}\\
\includegraphics[width=.9\linewidth]{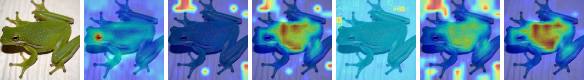}\\
\includegraphics[width=.9\linewidth]{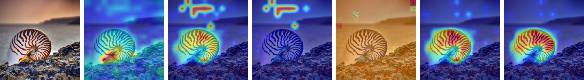}\\
\includegraphics[width=.9\linewidth]{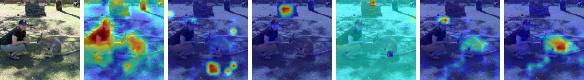}\\
\includegraphics[width=.9\linewidth]{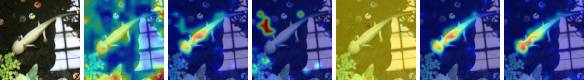}\\
\includegraphics[width=.9\linewidth]{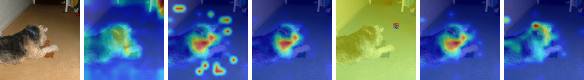}\\
\caption{Sample images from ImageNet val-set.}
\label{fig:single_class_1}
\end{figure}

\clearpage
\begin{figure}[h]
\centering
\begin{tabular*}{\linewidth}{l}
~~~~~~~~~~~~~~~~Input~~~~~~~~~~~~~~~~rollout~~~~~~~~~raw-attention~~~~~~~GradCAM~~~~~~~~~~~~~LRP~~~~~~~~~~~~~partial LRP~~~~~~~~~~~~Ours\\
\end{tabular*}
\includegraphics[width=.9\linewidth]{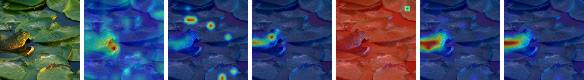}\\
\includegraphics[width=.9\linewidth]{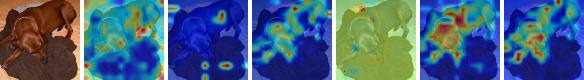}\\
\includegraphics[width=.9\linewidth]{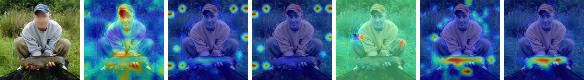}\\
\includegraphics[width=.9\linewidth]{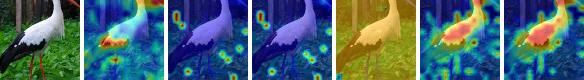}\\
\includegraphics[width=.9\linewidth]{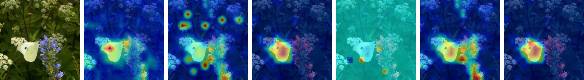}\\
\includegraphics[width=.9\linewidth]{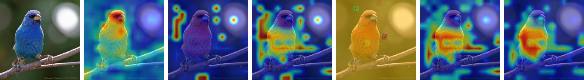}\\
\includegraphics[width=.9\linewidth]{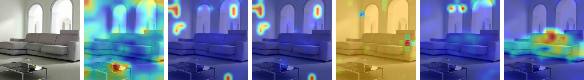}\\
\includegraphics[width=.9\linewidth]{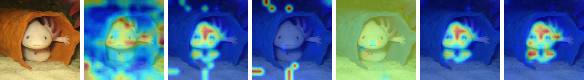}\\
\includegraphics[width=.9\linewidth]{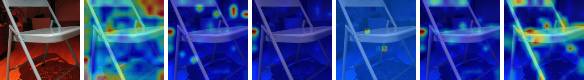}\\
\caption{Sample images from ImageNet val-set.}
\label{fig:single_class_2}
\end{figure}

\clearpage
\begin{figure}[h]
\centering
\begin{tabular*}{\linewidth}{l}
~~~~~~~~~~~~~~~~Input~~~~~~~~~~~~~~~~rollout~~~~~~~~~raw-attention~~~~~~~GradCAM~~~~~~~~~~~~~LRP~~~~~~~~~~~~~partial LRP~~~~~~~~~~~~Ours\\
\end{tabular*}
\includegraphics[width=.9\linewidth]{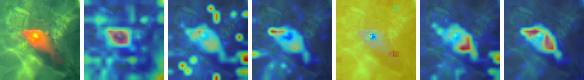}\\
\includegraphics[width=.9\linewidth]{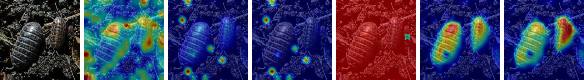}\\
\includegraphics[width=.9\linewidth]{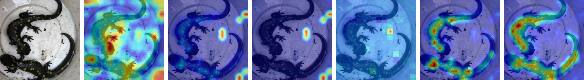}\\
\includegraphics[width=.9\linewidth]{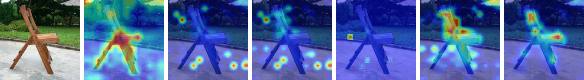}\\
\includegraphics[width=.9\linewidth]{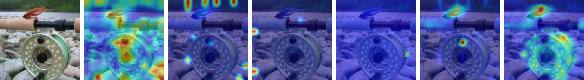}\\
\includegraphics[width=.9\linewidth]{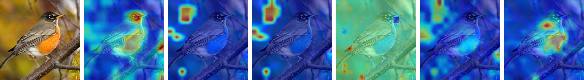}\\
\includegraphics[width=.9\linewidth]{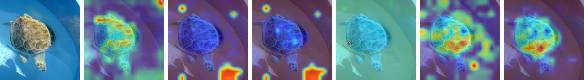}\\
\includegraphics[width=.9\linewidth]{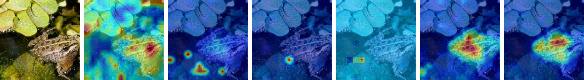}\\
\includegraphics[width=.9\linewidth]{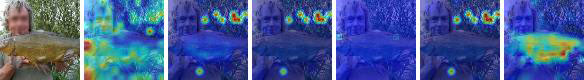}\\
\caption{Sample images from ImageNet val-set.}
\label{fig:single_class_3}
\end{figure}

\clearpage
\begin{figure}[h]
\centering
\begin{tabular*}{\linewidth}{l}
~~~~~~~~~~~~~~~~Input~~~~~~~~~~~~~~~~rollout~~~~~~~~~raw-attention~~~~~~~GradCAM~~~~~~~~~~~~~LRP~~~~~~~~~~~~~partial LRP~~~~~~~~~~~~Ours\\
\end{tabular*}
\includegraphics[width=.9\linewidth]{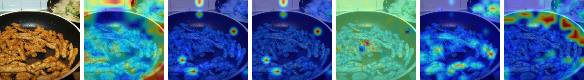}\\
\includegraphics[width=.9\linewidth]{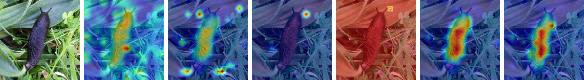}\\
\includegraphics[width=.9\linewidth]{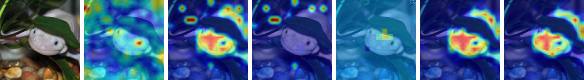}\\
\includegraphics[width=.9\linewidth]{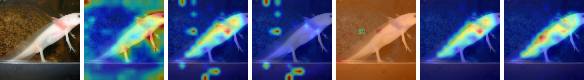}\\
\includegraphics[width=.9\linewidth]{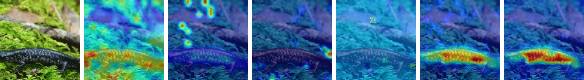}\\
\includegraphics[width=.9\linewidth]{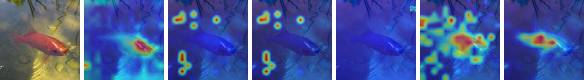}\\
\includegraphics[width=.9\linewidth]{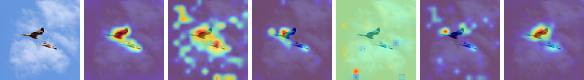}\\
\includegraphics[width=.9\linewidth]{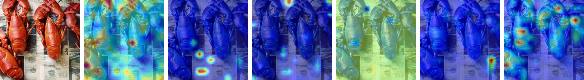}\\
\includegraphics[width=.9\linewidth]{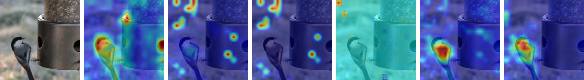}\\
\caption{Sample images from ImageNet val-set.}
\label{fig:single_class_4}
\end{figure}

\clearpage
\begin{figure}[h]
\centering
\begin{tabular*}{\linewidth}{l}
~~~~~~~~~~~~~~~~Input~~~~~~~~~~~~~~~~rollout~~~~~~~~~raw-attention~~~~~~~GradCAM~~~~~~~~~~~~~LRP~~~~~~~~~~~~~partial LRP~~~~~~~~~~~~Ours\\
\end{tabular*}
\includegraphics[width=.9\linewidth]{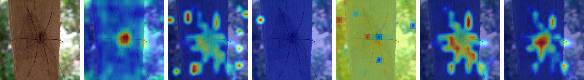}\\
\includegraphics[width=.9\linewidth]{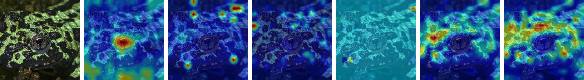}\\
\includegraphics[width=.9\linewidth]{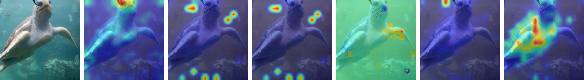}\\
\includegraphics[width=.9\linewidth]{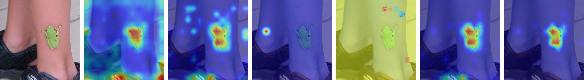}\\
\includegraphics[width=.9\linewidth]{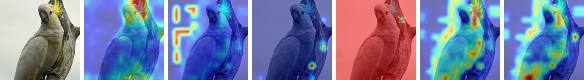}\\
\includegraphics[width=.9\linewidth]{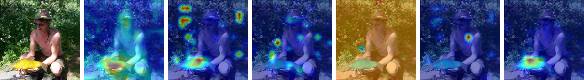}\\
\includegraphics[width=.9\linewidth]{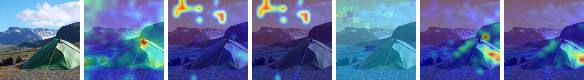}\\
\includegraphics[width=.9\linewidth]{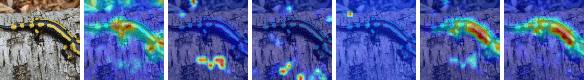}\\
\includegraphics[width=.9\linewidth]{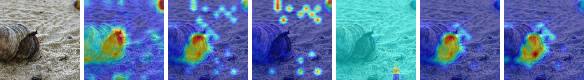}\\
\caption{Sample images from ImageNet val-set.}
\label{fig:single_class_5}
\end{figure}

\clearpage
\begin{figure}[h]
\centering
\begin{tabular*}{\linewidth}{l}
~~~~~~~~~~~~~~~~Input~~~~~~~~~~~~~~~~rollout~~~~~~~~~raw-attention~~~~~~~GradCAM~~~~~~~~~~~~~LRP~~~~~~~~~~~~~partial LRP~~~~~~~~~~~~Ours\\
\end{tabular*}
\includegraphics[width=.9\linewidth]{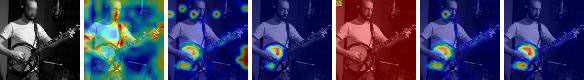}\\
\includegraphics[width=.9\linewidth]{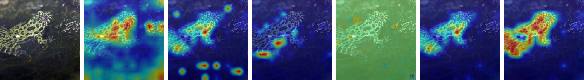}\\
\includegraphics[width=.9\linewidth]{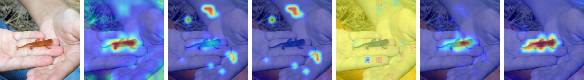}\\
\includegraphics[width=.9\linewidth]{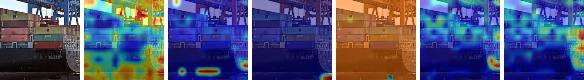}\\
\includegraphics[width=.9\linewidth]{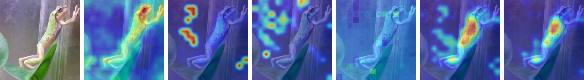}\\
\includegraphics[width=.9\linewidth]{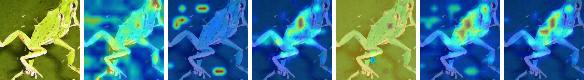}\\
\includegraphics[width=.9\linewidth]{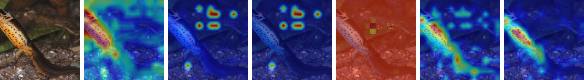}\\
\includegraphics[width=.9\linewidth]{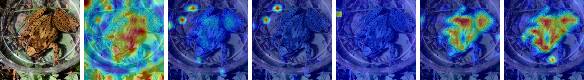}\\
\includegraphics[width=.9\linewidth]{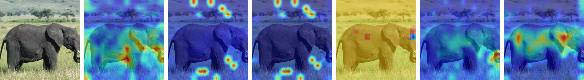}\\
\caption{Sample images from ImageNet val-set.}
\label{fig:single_class_6}
\end{figure}

\clearpage
\begin{figure}[h]
\centering
\begin{tabular*}{\linewidth}{l}
~~~~~~~~~~~~~~~~Input~~~~~~~~~~~~~~~~rollout~~~~~~~~~raw-attention~~~~~~~GradCAM~~~~~~~~~~~~~LRP~~~~~~~~~~~~~partial LRP~~~~~~~~~~~~Ours\\
\end{tabular*}
\includegraphics[width=.9\linewidth]{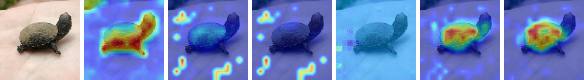}\\
\includegraphics[width=.9\linewidth]{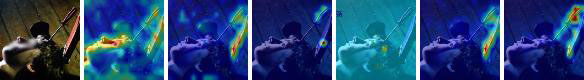}\\
\includegraphics[width=.9\linewidth]{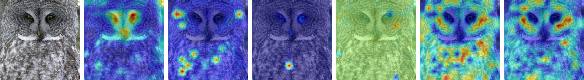}\\
\includegraphics[width=.9\linewidth]{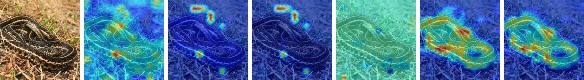}\\
\includegraphics[width=.9\linewidth]{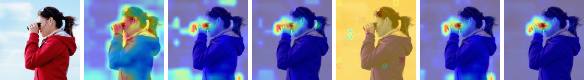}\\
\includegraphics[width=.9\linewidth]{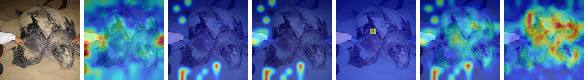}\\
\includegraphics[width=.9\linewidth]{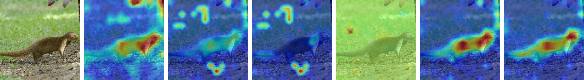}\\
\includegraphics[width=.9\linewidth]{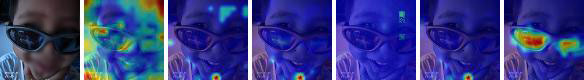}\\
\includegraphics[width=.9\linewidth]{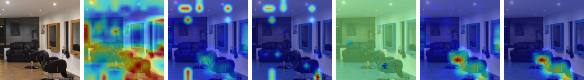}\\
\caption{Sample images from ImageNet val-set.}
\label{fig:single_class_7}
\end{figure}

\clearpage
\begin{figure}[h]
\centering
\begin{tabular*}{\linewidth}{l}
~~~~~~~~~~~~~~~~Input~~~~~~~~~~~~~~~~rollout~~~~~~~~~raw-attention~~~~~~~GradCAM~~~~~~~~~~~~~LRP~~~~~~~~~~~~~partial LRP~~~~~~~~~~~~Ours\\
\end{tabular*}
\includegraphics[width=.9\linewidth]{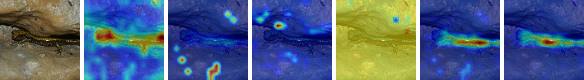}\\
\includegraphics[width=.9\linewidth]{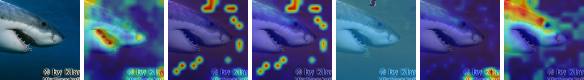}\\
\includegraphics[width=.9\linewidth]{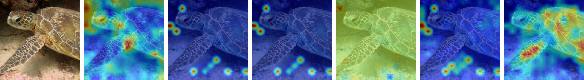}\\
\includegraphics[width=.9\linewidth]{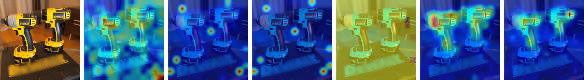}\\
\includegraphics[width=.9\linewidth]{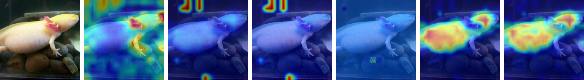}\\
\includegraphics[width=.9\linewidth]{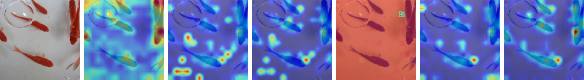}\\
\includegraphics[width=.9\linewidth]{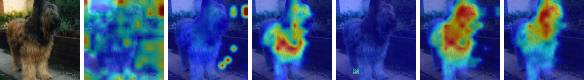}\\
\includegraphics[width=.9\linewidth]{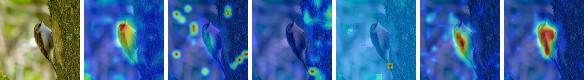}\\
\includegraphics[width=.9\linewidth]{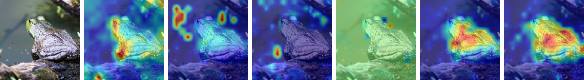}\\
\caption{Sample images from ImageNet val-set.}
\label{fig:single_class_8}
\end{figure}

\clearpage
\begin{figure}[h]
\centering
\begin{tabular*}{\linewidth}{l}
~~~~~~~~~~~~~~~~Input~~~~~~~~~~~~~~~~rollout~~~~~~~~~raw-attention~~~~~~~GradCAM~~~~~~~~~~~~~LRP~~~~~~~~~~~~~partial LRP~~~~~~~~~~~~Ours\\
\end{tabular*}
\includegraphics[width=.9\linewidth]{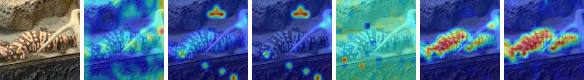}\\
\includegraphics[width=.9\linewidth]{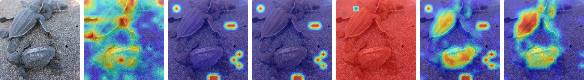}\\
\includegraphics[width=.9\linewidth]{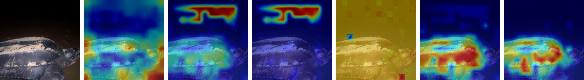}\\
\includegraphics[width=.9\linewidth]{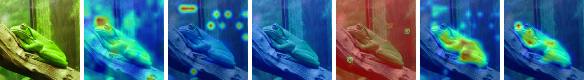}\\
\includegraphics[width=.9\linewidth]{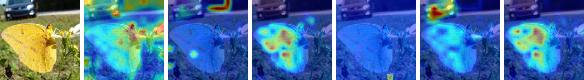}\\
\includegraphics[width=.9\linewidth]{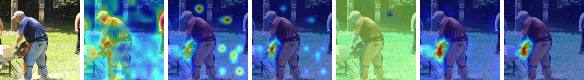}\\
\includegraphics[width=.9\linewidth]{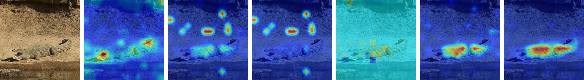}\\
\includegraphics[width=.9\linewidth]{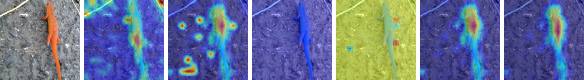}\\
\includegraphics[width=.9\linewidth]{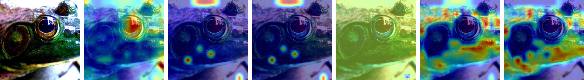}\\
\caption{Sample images from ImageNet val-set.}
\label{fig:single_class_9}
\end{figure}

\clearpage
\section{Visualizations - Text}

In the following visualizations, we use the TAHV heatmap generator for text (\url{https://github.com/jiesutd/Text-Attention-Heatmap-Visualization}) to present the relevancy scores for each method, as well as the excerpts marked by humans. For methods that are class-dependent, we present the attributions obtained both for the ground truth class and the counter-factual class.

Evidently, our method is the only one that is able to present support for both sides, see panels (b,c) of each image. GradCAM often suffers from highlighting the evidence in the opposite direction (sign reversal), e.g., Fig.~\ref{fig:my_label1}(g), in which the counter-factual explanation of GradCAM supports the negative, ground truth, sentiment and not the positive one. 

Partial LRP (panels d,e) is not class-specific in practice. This provides it with an advantage in the quantitative experiments: Partial LRP highlights words with both positive and negative connotations from the same sentence, which better matches the behavior of the human annotators who are asked to mark complete sentences. 

Notice that in most visualizations, it seems that the rollout method focuses mostly on the separation token [SEP], and fails to generate meaningful visualizations. This corresponds to the results presented in the quantitative experiments.
 
It seems from our results, e.g., Fig.~\ref{fig:my_label1}(b,c) that the BERT tokenizer leads to unintuitive results. For example, ``joyless'' is broken down into ``joy'' and ``less'', each supporting different sides of the decision.

~\\
~\\
~\\
~\\
~\\
~\\
~\\
~\\
~\\
~\\
~\\
~\\
~\\
~\\
~\\
~\\
~\\
~\\
~\\
~\\
~\\
~\\
~\\

\captionsetup[figure]{font=scriptsize,labelfont=scriptsize}

\begin{figure}[h]
    \centering
    \includegraphics[width=0.99\linewidth]{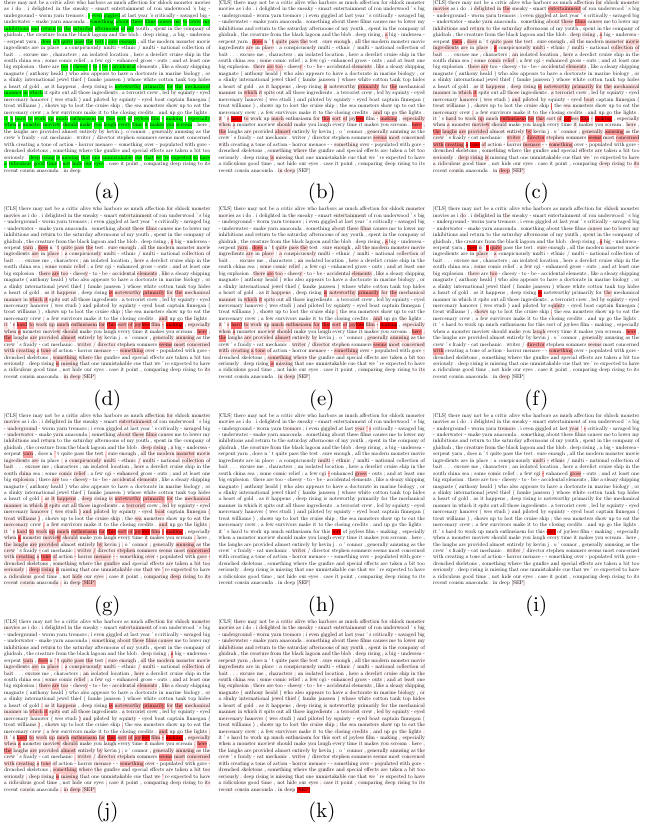}
    \caption{A visualization of the results. For methods that rely on a specific class for propagation, we present both the ground-truth and counter-factual results. The rollout method, as well as the raw attention methods, are class-agnostic. Some words are split into multiple tokens by the BERT tokenizer. (a) Ground truth (\textbf{negative} sentiment). Note that the BERT prediction on this sample was \textbf{accurate}. (b) Our method for the ground truth [GT] class. (c) Our method for the counter-factual [CF] class. (d) Partial LRP for the GT class. (e) Partial LRP for the CF class. (f) GradCAM for the GT class. (g) GradCAM for the CF class. (h) LRP for the GT class. (i) LRP for the CF class. (j) raw-attention. (k) rollout. }
    \label{fig:my_label1}
\end{figure}

\clearpage
\captionsetup[figure]{font=scriptsize,labelfont=scriptsize}
\begin{figure}[h]
    \centering
    \includegraphics[width=0.99\linewidth]{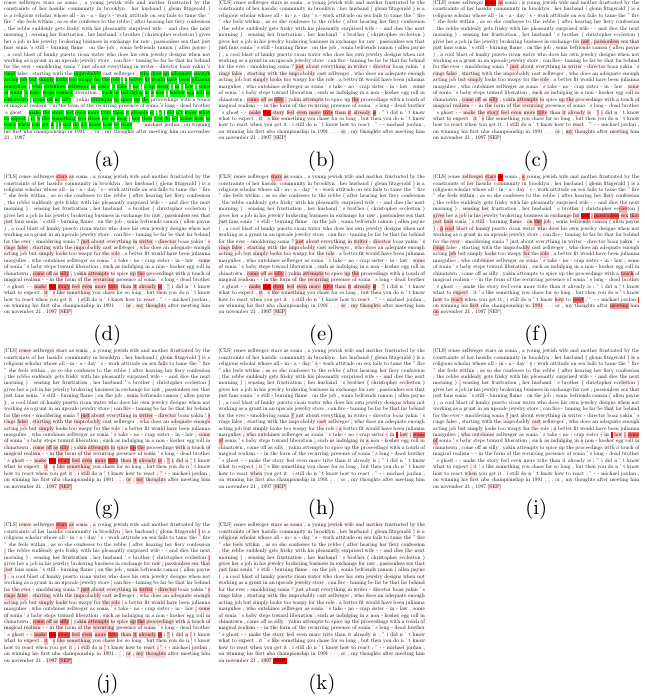}
    \caption{A visualization of the results. For methods that rely on a specific class for propagation, we present both the ground-truth and counter-factual results. The rollout method, as well as the raw attention methods, are class-agnostic. Some words are split into multiple tokens by the BERT tokenizer. (a) Ground truth (\textbf{negative} sentiment). Note that the BERT prediction on this sample was \textbf{accurate}. (b) Our method for the ground truth [GT] class. (c) Our method for the counter-factual [CF] class. (d) Partial LRP for the GT class. (e) Partial LRP for the CF class. (f) GradCAM for the GT class. (g) GradCAM for the CF class. (h) LRP for the GT class. (i) LRP for the CF class. (j) raw-attention. (k) rollout. }
    \label{fig:my_label}
\end{figure}

\clearpage
\captionsetup[figure]{font=scriptsize,labelfont=scriptsize}
\begin{figure}[h]
    \centering
    \includegraphics[width=0.99\linewidth]{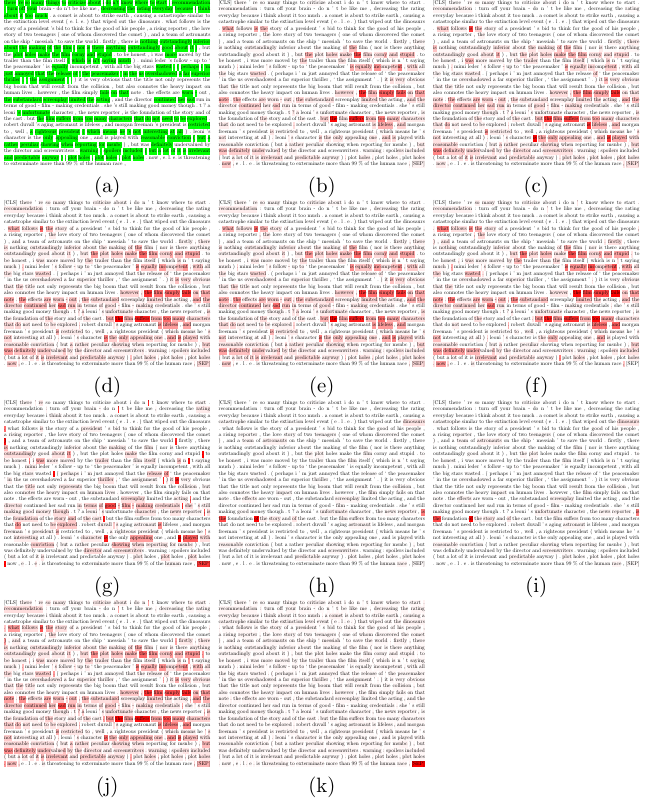}
    \caption{A visualization of the results. For methods that rely on a specific class for propagation, we present both the ground-truth and counter-factual results. The rollout method, as well as the raw attention methods, are class-agnostic. Some words are split into multiple tokens by the BERT tokenizer. (a) Ground truth (\textbf{negative} sentiment). Note that the BERT prediction on this sample was \textbf{accurate}. (b) Our method for the ground truth [GT] class. (c) Our method for the counter-factual [CF] class. (d) Partial LRP for the GT class. (e) Partial LRP for the CF class. (f) GradCAM for the GT class. (g) GradCAM for the CF class. (h) LRP for the GT class. (i) LRP for the CF class. (j) raw-attention. (k) rollout. }
    \label{fig:my_label}
\end{figure}

\clearpage
\captionsetup[figure]{font=scriptsize,labelfont=scriptsize}
\begin{figure}[h]
    \centering
    \includegraphics[width=0.99\linewidth]{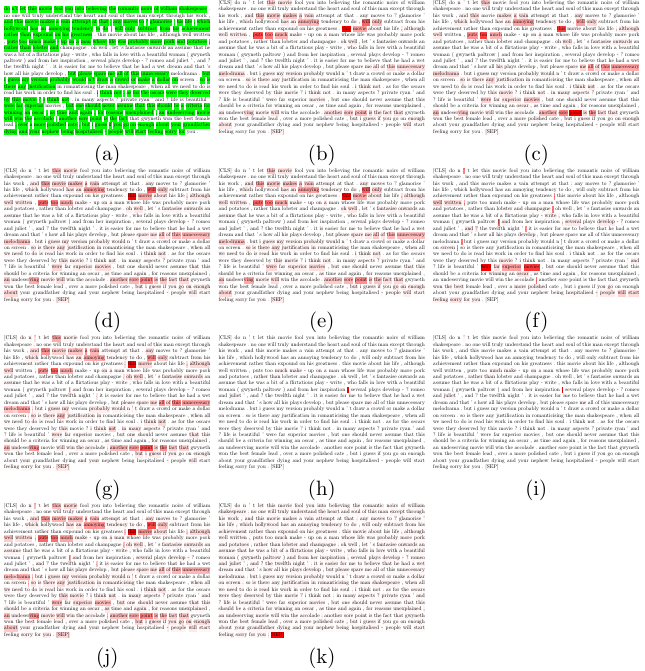}
    \caption{A visualization of the results. For methods that rely on a specific class for propagation, we present both the ground-truth and counter-factual results. The rollout method, as well as the raw attention methods, are class-agnostic. Some words are split into multiple tokens by the BERT tokenizer. (a) Ground truth (negative sentiment). Note that the BERT prediction on this sample was accurate. (b) Our method for the ground truth [GT] class. (c) Our method for the counter-factual [CF] class. (d) Partial LRP for the GT class. (e) Partial LRP for the CF class. (f) GradCAM for the GT class. (g) GradCAM for the CF class. (h) LRP for the GT class. (i) LRP for the CF class. (j) raw-attention. (k) rollout. }
    \label{fig:my_label}
\end{figure}

\clearpage
\captionsetup[figure]{font=scriptsize,labelfont=scriptsize}
\begin{figure}[h]
    \centering
    \includegraphics[width=0.99\linewidth]{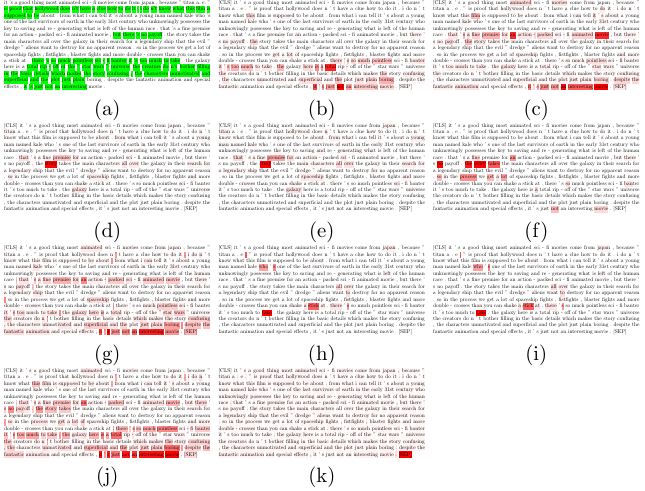}
    \caption{A visualization of the results. For methods that rely on a specific class for propagation, we present both the ground-truth and counter-factual results. The rollout method, as well as the raw attention methods, are class-agnostic. Some words are split into multiple tokens by the BERT tokenizer. (a) Ground truth (\textbf{negative} sentiment). Note that the BERT prediction on this sample was \textbf{accurate}. (b) Our method for the ground truth [GT] class. (c) Our method for the counter-factual [CF] class. (d) Partial LRP for the GT class. (e) Partial LRP for the CF class. (f) GradCAM for the GT class. (g) GradCAM for the CF class. (h) LRP for the GT class. (i) LRP for the CF class. (j) raw-attention. (k) rollout. }
    \label{fig:my_label}
\end{figure}

\clearpage
\captionsetup[figure]{font=scriptsize,labelfont=scriptsize}
\begin{figure}[h]
    \centering
    \includegraphics[width=0.99\linewidth]{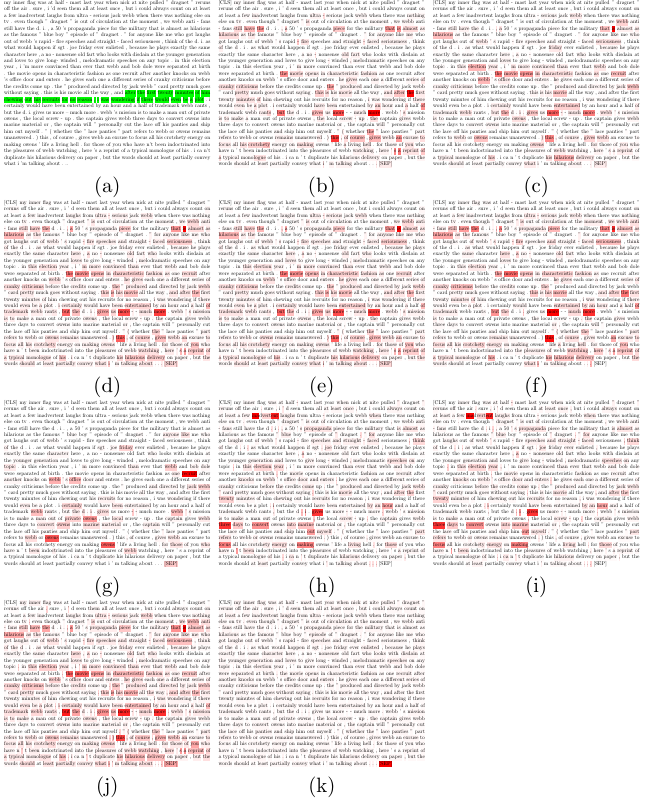}
    \caption{A visualization of the results. For methods that rely on a specific class for propagation, we present both the ground-truth and counter-factual results. The rollout method, as well as the raw attention methods, are class-agnostic. Some words are split into multiple tokens by the BERT tokenizer. (a) Ground truth (\textbf{negative} sentiment). Note that the BERT prediction on this sample was \textbf{mistaken}. (b) Our method for the ground truth [GT] class. (c) Our method for the counter-factual [CF] class. (d) Partial LRP for the GT class. (e) Partial LRP for the CF class. (f) GradCAM for the GT class. (g) GradCAM for the CF class. (h) LRP for the GT class. (i) LRP for the CF class. (j) raw-attention. (k) rollout. }
    \label{fig:my_label}
\end{figure}

\clearpage
\captionsetup[figure]{font=scriptsize,labelfont=scriptsize}

\begin{figure}[h]
    \centering
    \includegraphics[width=0.99\linewidth]{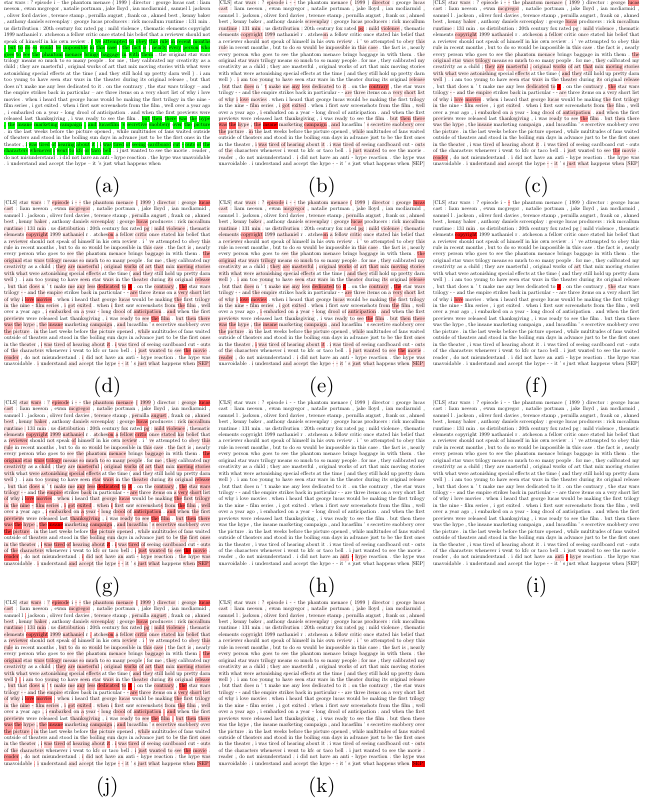}
    \caption{A visualization of the results. For methods that rely on a specific class for propagation, we present both the ground-truth and counter-factual results. The rollout method, as well as the raw attention methods, are class-agnostic. Some words are split into multiple tokens by the BERT tokenizer. (a) Ground truth (\textbf{negative} sentiment). Note that the BERT prediction on this sample was \textbf{mistaken}. (b) Our method for the ground truth [GT] class. (c) Our method for the counter-factual [CF] class. (d) Partial LRP for the GT class. (e) Partial LRP for the CF class. (f) GradCAM for the GT class. (g) GradCAM for the CF class. (h) LRP for the GT class. (i) LRP for the CF class. (j) raw-attention. (k) rollout. }
    \label{fig:my_label}
\end{figure}

\clearpage
\captionsetup[figure]{font=scriptsize,labelfont=scriptsize}
\begin{figure}[h]
    \centering
    \includegraphics[width=0.99\linewidth]{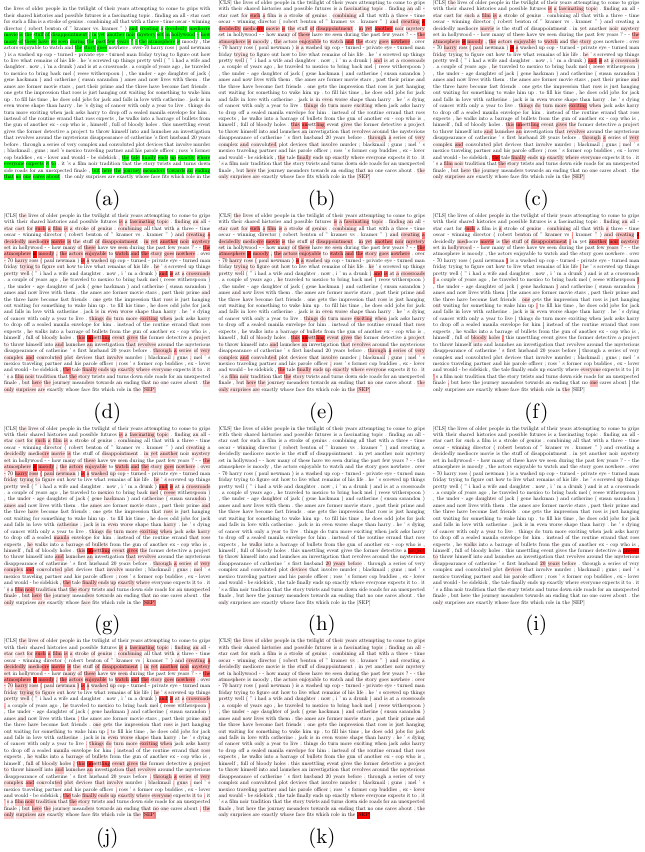}
    \caption{A visualization of the results. For methods that rely on a specific class for propagation, we present both the ground-truth and counter-factual results. The rollout method, as well as the raw attention methods, are class-agnostic. Some words are split into multiple tokens by the BERT tokenizer. (a) Ground truth (\textbf{negative} sentiment). Note that the BERT prediction on this sample was \textbf{mistaken}. (b) Our method for the ground truth [GT] class. (c) Our method for the counter-factual [CF] class. (d) Partial LRP for the GT class. (e) Partial LRP for the CF class. (f) GradCAM for the GT class. (g) GradCAM for the CF class. (h) LRP for the GT class. (i) LRP for the CF class. (j) raw-attention. (k) rollout. }
    \label{fig:my_label}
\end{figure}

\clearpage
\captionsetup[figure]{font=scriptsize,labelfont=scriptsize}
\begin{figure}[h]
    \centering
    \includegraphics[width=0.99\linewidth]{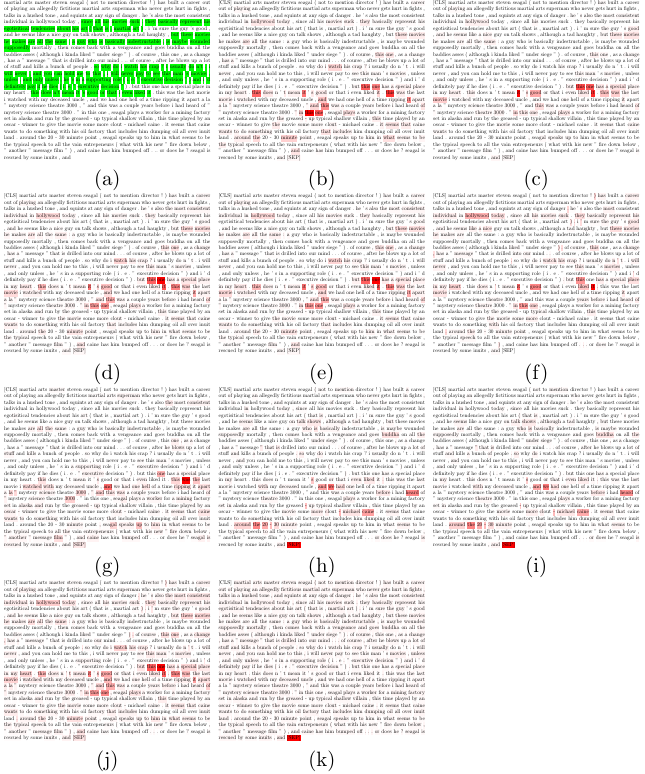}
    \caption{A visualization of the results. For methods that rely on a specific class for propagation, we present both the ground-truth and counter-factual results. The rollout method, as well as the raw attention methods, are class-agnostic. Some words are split into multiple tokens by the BERT tokenizer. (a) Ground truth (\textbf{negative} sentiment). Note that the BERT prediction on this sample was \textbf{mistaken}. (b) Our method for the ground truth [GT] class. (c) Our method for the counter-factual [CF] class. (d) Partial LRP for the GT class. (e) Partial LRP for the CF class. (f) GradCAM for the GT class. (g) GradCAM for the CF class. (h) LRP for the GT class. (i) LRP for the CF class. (j) raw-attention. (k) rollout. }
    \label{fig:my_label}
\end{figure}

\clearpage
\captionsetup[figure]{font=scriptsize,labelfont=scriptsize}
\begin{figure}[h]
    \centering
    \includegraphics[width=0.99\linewidth]{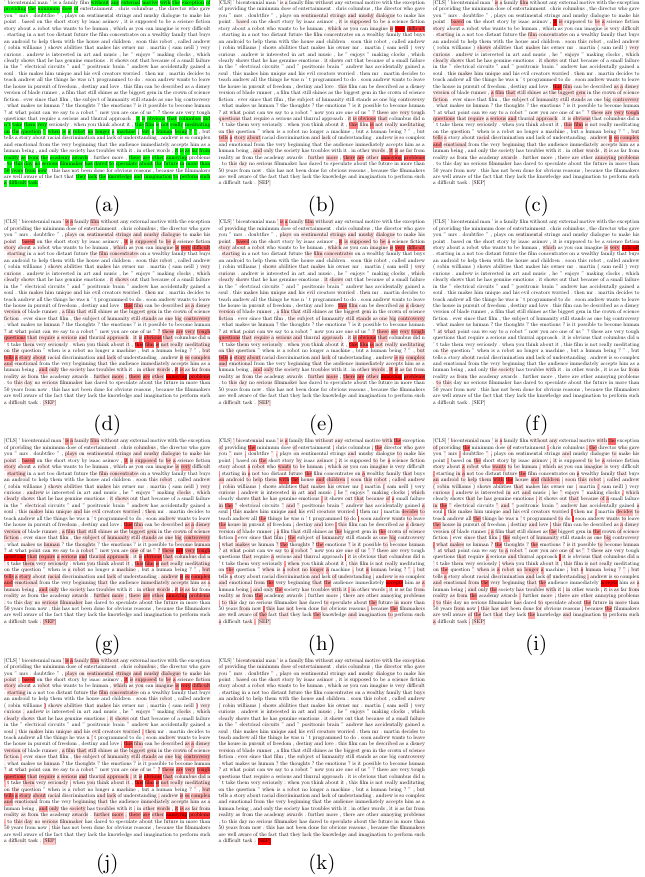}
    \caption{A visualization of the results. For methods that rely on a specific class for propagation, we present both the ground-truth and counter-factual results. The rollout method, as well as the raw attention methods, are class-agnostic. Some words are split into multiple tokens by the BERT tokenizer. (a) Ground truth (\textbf{negative} sentiment). Note that the BERT prediction on this sample was \textbf{mistaken}. (b) Our method for the ground truth [GT] class. (c) Our method for the counter-factual [CF] class. (d) Partial LRP for the GT class. (e) Partial LRP for the CF class. (f) GradCAM for the GT class. (g) GradCAM for the CF class. (h) LRP for the GT class. (i) LRP for the CF class. (j) raw-attention. (k) rollout. }
    \label{fig:my_label}
\end{figure}

\clearpage
\captionsetup[figure]{font=scriptsize,labelfont=scriptsize}
\begin{figure}[h]
    \centering
    \includegraphics[width=0.99\linewidth]{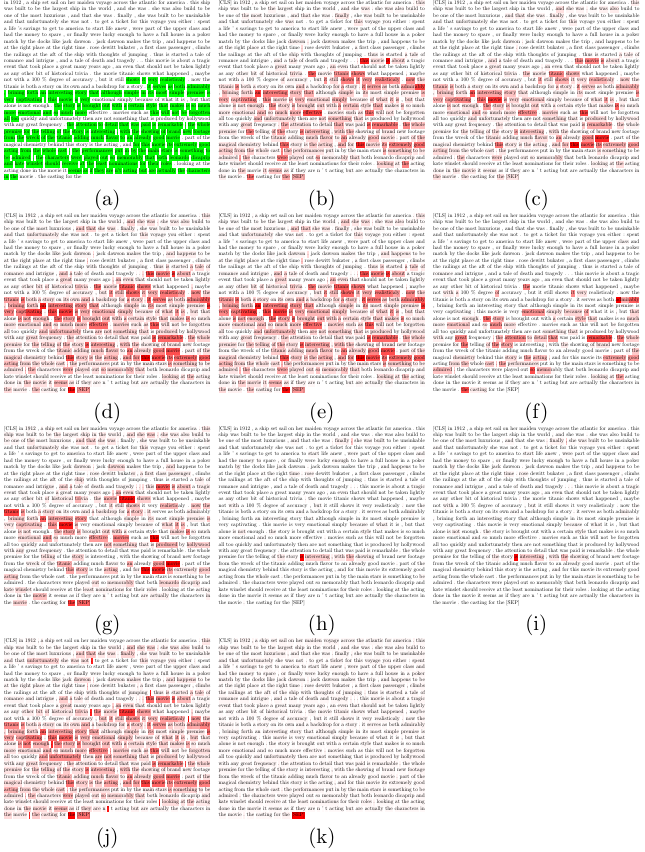}
    \caption{A visualization of the results. For methods that rely on a specific class for propagation, we present both the ground-truth and counter-factual results. The rollout method, as well as the raw attention methods, are class-agnostic. Some words are split into multiple tokens by the BERT tokenizer. (a) Ground truth (\textbf{positive} sentiment). Note that the BERT prediction on this sample was \textbf{accurate}. (b) Our method for the ground truth [GT] class. (c) Our method for the counter-factual [CF] class. (d) Partial LRP for the GT class. (e) Partial LRP for the CF class. (f) GradCAM for the GT class. (g) GradCAM for the CF class. (h) LRP for the GT class. (i) LRP for the CF class. (j) raw-attention. (k) rollout. }
    \label{fig:my_label}
\end{figure}

\clearpage
\captionsetup[figure]{font=scriptsize,labelfont=scriptsize}
\begin{figure}[h]
    \centering
    \includegraphics[width=0.99\linewidth]{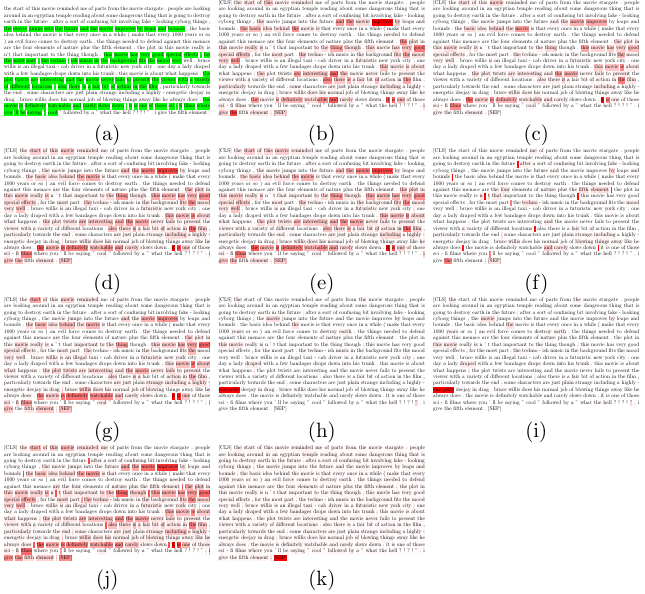}
    \caption{A visualization of the results. For methods that rely on a specific class for propagation, we present both the ground-truth and counter-factual results. The rollout method, as well as the raw attention methods, are class-agnostic. Some words are split into multiple tokens by the BERT tokenizer. (a) Ground truth (\textbf{positive} sentiment). Note that the BERT prediction on this sample was \textbf{accurate}. (b) Our method for the ground truth [GT] class. (c) Our method for the counter-factual [CF] class. (d) Partial LRP for the GT class. (e) Partial LRP for the CF class. (f) GradCAM for the GT class. (g) GradCAM for the CF class. (h) LRP for the GT class. (i) LRP for the CF class. (j) raw-attention. (k) rollout. }
    \label{fig:my_label}
\end{figure}

\clearpage
\captionsetup[figure]{font=scriptsize,labelfont=scriptsize}
\begin{figure}[h]
    \centering
    \includegraphics[width=0.99\linewidth]{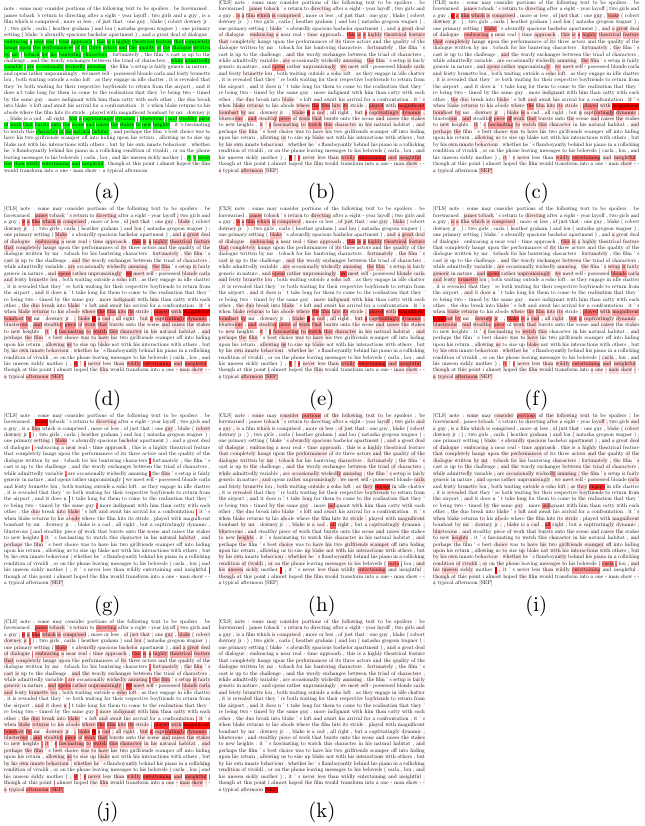}
    \caption{A visualization of the results. For methods that rely on a specific class for propagation, we present both the ground-truth and counter-factual results. The rollout method, as well as the raw attention methods, are class-agnostic. Some words are split into multiple tokens by the BERT tokenizer. (a) Ground truth (\textbf{positive} sentiment). Note that the BERT prediction on this sample was \textbf{accurate}. (b) Our method for the ground truth [GT] class. (c) Our method for the counter-factual [CF] class. (d) Partial LRP for the GT class. (e) Partial LRP for the CF class. (f) GradCAM for the GT class. (g) GradCAM for the CF class. (h) LRP for the GT class. (i) LRP for the CF class. (j) raw-attention. (k) rollout. }
    \label{fig:my_label}
\end{figure}

\clearpage
\captionsetup[figure]{font=scriptsize,labelfont=scriptsize}
\begin{figure}[h]
    \centering
    \includegraphics[width=0.99\linewidth]{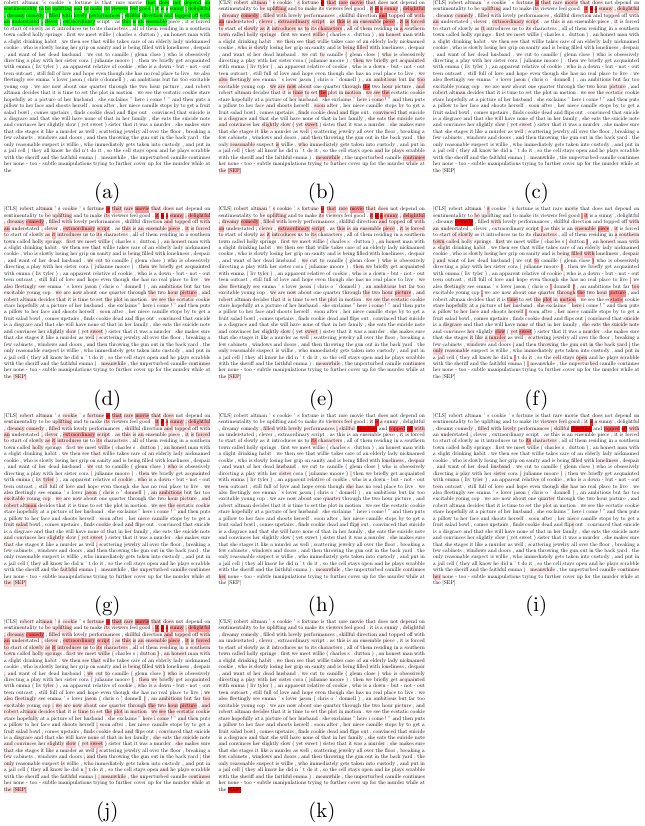}
    \caption{A visualization of the results. For methods that rely on a specific class for propagation, we present both the ground-truth and counter-factual results. The rollout method, as well as the raw attention methods, are class-agnostic. Some words are split into multiple tokens by the BERT tokenizer. (a) Ground truth (\textbf{positive} sentiment). Note that the BERT prediction on this sample was \textbf{accurate}. (b) Our method for the ground truth [GT] class. (c) Our method for the counter-factual [CF] class. (d) Partial LRP for the GT class. (e) Partial LRP for the CF class. (f) GradCAM for the GT class. (g) GradCAM for the CF class. (h) LRP for the GT class. (i) LRP for the CF class. (j) raw-attention. (k) rollout. }
    \label{fig:my_label}
\end{figure}

\clearpage
\captionsetup[figure]{font=scriptsize,labelfont=scriptsize}
\begin{figure}[h]
    \centering
    \includegraphics[width=0.99\linewidth]{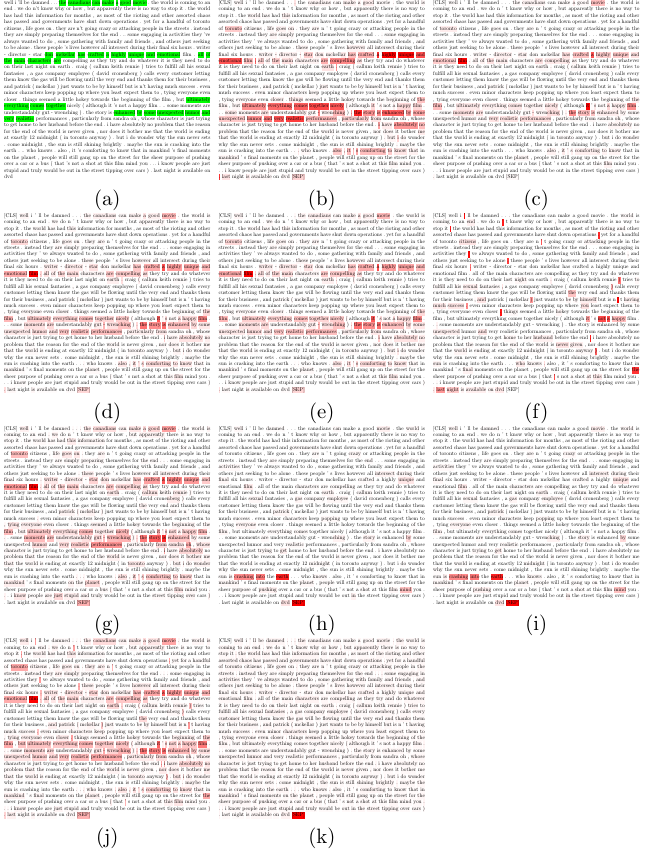}
    \caption{A visualization of the results. For methods that rely on a specific class for propagation, we present both the ground-truth and counter-factual results. The rollout method, as well as the raw attention methods, are class-agnostic. Some words are split into multiple tokens by the BERT tokenizer. (a) Ground truth (\textbf{positive} sentiment). Note that the BERT prediction on this sample was \textbf{accurate}. (b) Our method for the ground truth [GT] class. (c) Our method for the counter-factual [CF] class. (d) Partial LRP for the GT class. (e) Partial LRP for the CF class. (f) GradCAM for the GT class. (g) GradCAM for the CF class. (h) LRP for the GT class. (i) LRP for the CF class. (j) raw-attention. (k) rollout. }
    \label{fig:my_label}
\end{figure}

\clearpage
\captionsetup[figure]{font=scriptsize,labelfont=scriptsize}
\begin{figure}[h]
    \centering
    \includegraphics[width=0.99\linewidth]{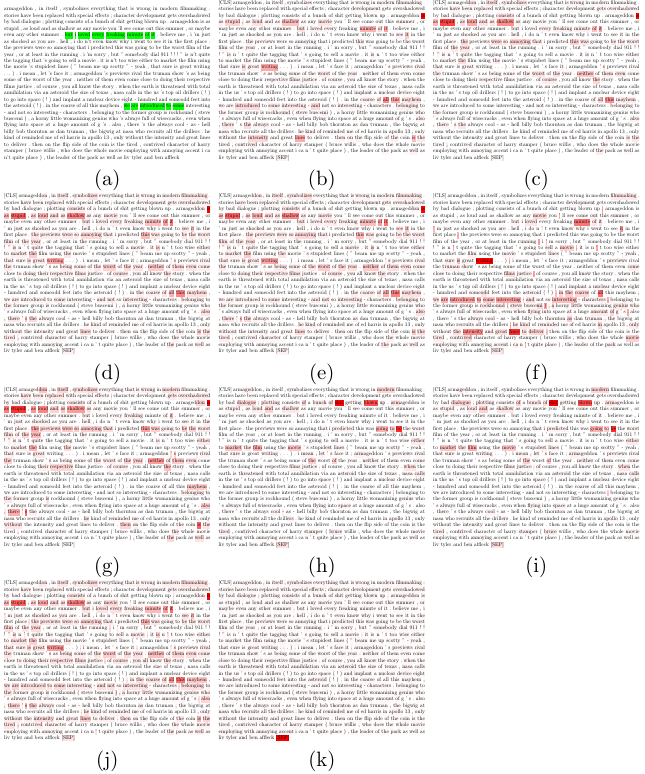}
    \caption{A visualization of the results. For methods that rely on a specific class for propagation, we present both the ground-truth and counter-factual results. The rollout method, as well as the raw attention methods, are class-agnostic. Some words are split into multiple tokens by the BERT tokenizer. (a) Ground truth (\textbf{positive} sentiment). Note that the BERT prediction on this sample was \textbf{mistaken}. (b) Our method for the ground truth [GT] class. (c) Our method for the counter-factual [CF] class. (d) Partial LRP for the GT class. (e) Partial LRP for the CF class. (f) GradCAM for the GT class. (g) GradCAM for the CF class. (h) LRP for the GT class. (i) LRP for the CF class. (j) raw-attention. (k) rollout. }
    \label{fig:my_label}
\end{figure}

\clearpage
\captionsetup[figure]{font=scriptsize,labelfont=scriptsize}
\begin{figure}[h]
    \centering
    \includegraphics[width=0.99\linewidth]{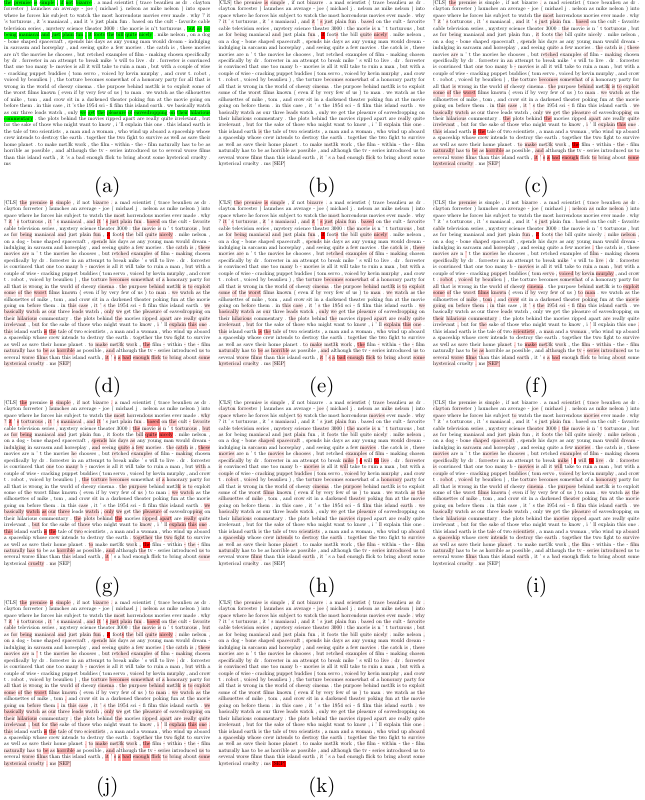}
    \caption{A visualization of the results. For methods that rely on a specific class for propagation, we present both the ground-truth and counter-factual results. The rollout method, as well as the raw attention methods, are class-agnostic. Some words are split into multiple tokens by the BERT tokenizer. (a) Ground truth (\textbf{positive} sentiment). Note that the BERT prediction on this sample was \textbf{mistaken}. (b) Our method for the ground truth [GT] class. (c) Our method for the counter-factual [CF] class. (d) Partial LRP for the GT class. (e) Partial LRP for the CF class. (f) GradCAM for the GT class. (g) GradCAM for the CF class. (h) LRP for the GT class. (i) LRP for the CF class. (j) raw-attention. (k) rollout. }
    \label{fig:my_label}
\end{figure}

\clearpage
\captionsetup[figure]{font=scriptsize,labelfont=scriptsize}
\begin{figure}[h]
    \centering
    \includegraphics[width=0.99\linewidth]{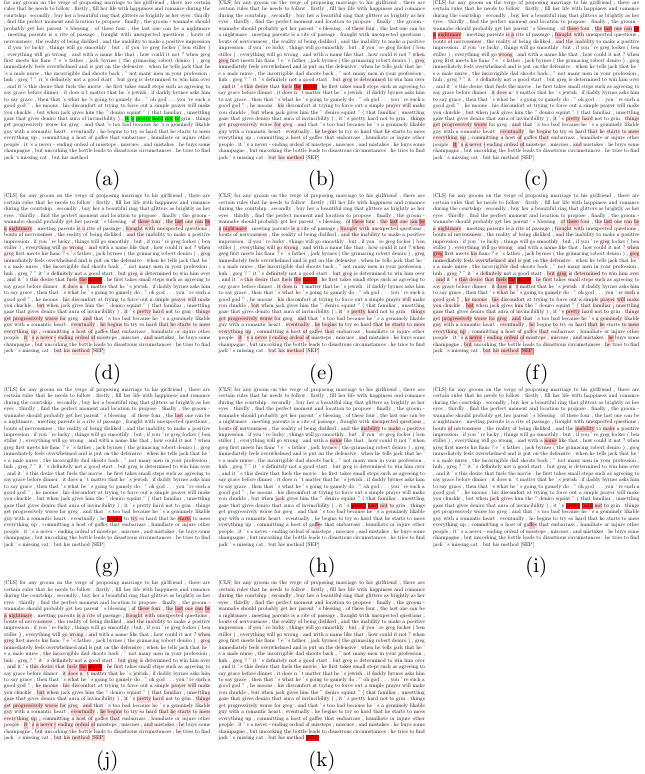}
    \caption{A visualization of the results. For methods that rely on a specific class for propagation, we present both the ground-truth and counter-factual results. The rollout method, as well as the raw attention methods, are class-agnostic. Some words are split into multiple tokens by the BERT tokenizer. (a) Ground truth (\textbf{positive} sentiment). Note that the BERT prediction on this sample was \textbf{mistaken}. (b) Our method for the ground truth [GT] class. (c) Our method for the counter-factual [CF] class. (d) Partial LRP for the GT class. (e) Partial LRP for the CF class. (f) GradCAM for the GT class. (g) GradCAM for the CF class. (h) LRP for the GT class. (i) LRP for the CF class. (j) raw-attention. (k) rollout. }
    \label{fig:my_label}
\end{figure}

\clearpage
\captionsetup[figure]{font=scriptsize,labelfont=scriptsize}
\begin{figure}[h]
    \centering
    \includegraphics[width=0.99\linewidth]{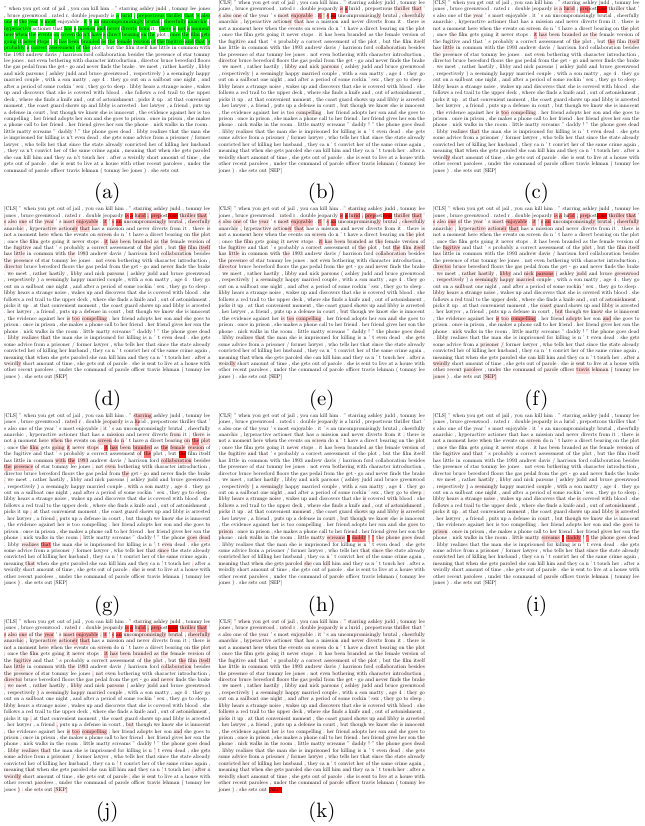}
    \caption{A visualization of the results. For methods that rely on a specific class for propagation, we present both the ground-truth and counter-factual results. The rollout method, as well as the raw attention methods, are class-agnostic. Some words are split into multiple tokens by the BERT tokenizer. (a) Ground truth (\textbf{positive} sentiment). Note that the BERT prediction on this sample was \textbf{mistaken}. (b) Our method for the ground truth [GT] class. (c) Our method for the counter-factual [CF] class. (d) Partial LRP for the GT class. (e) Partial LRP for the CF class. (f) GradCAM for the GT class. (g) GradCAM for the CF class. (h) LRP for the GT class. (i) LRP for the CF class. (j) raw-attention. (k) rollout. }
    \label{fig:my_label}
\end{figure}


\end{document}